\newtheorem{theorem}{Theorem}
\newtheorem{observation}{Observation}
\newtheorem{lemma}{Lemma}
\newtheorem{definition}{Definition}
\title{Protecting Geolocation Privacy of Photo Collections}
\author{
  Jinghan Yang, Ayan Chakrabarti, Yevgeniy Vorobeychik\\
 Computer Science \& Engineering, Washington University in St. Louis\\
  \{jinghan.yang, ayan, yvorobeychik\}@wustl.edu
}
\begin{document}

\maketitle

\begin{abstract}
People increasingly share personal information, including their
photos and photo collections, on social media.
This information, however, can compromise individual privacy,
particularly as social media platforms use it to infer detailed models
of user behavior, including tracking their location.
We consider the specific issue of location privacy as potentially
revealed by posting photo collections, which facilitate accurate
geolocation with the help of deep learning methods even in the absence
of geotags.
One means to limit associated inadvertent geolocation privacy
disclosure is by carefully pruning select photos from photo
collections before these are posted publicly.
We study this problem formally as a combinatorial
optimization problem in the context of geolocation prediction
facilitated by deep learning.
We first demonstrate the complexity both by showing that a natural
greedy algorithm can be arbitrarily bad and by proving that the problem is
NP-Hard.
We then exhibit an important tractable special case, as well as a more
general approach based on mixed-integer linear programming.
Through extensive experiments on real photo collections, we demonstrate
that our approaches are indeed highly effective at preserving geolocation privacy.
\end{abstract}

\newcommand{\citep}[1]{\citeauthor{#1}~\shortcite{#1}}

\section{Introduction}

\begin{figure}
  \centering
  \includegraphics[width=0.825\columnwidth]{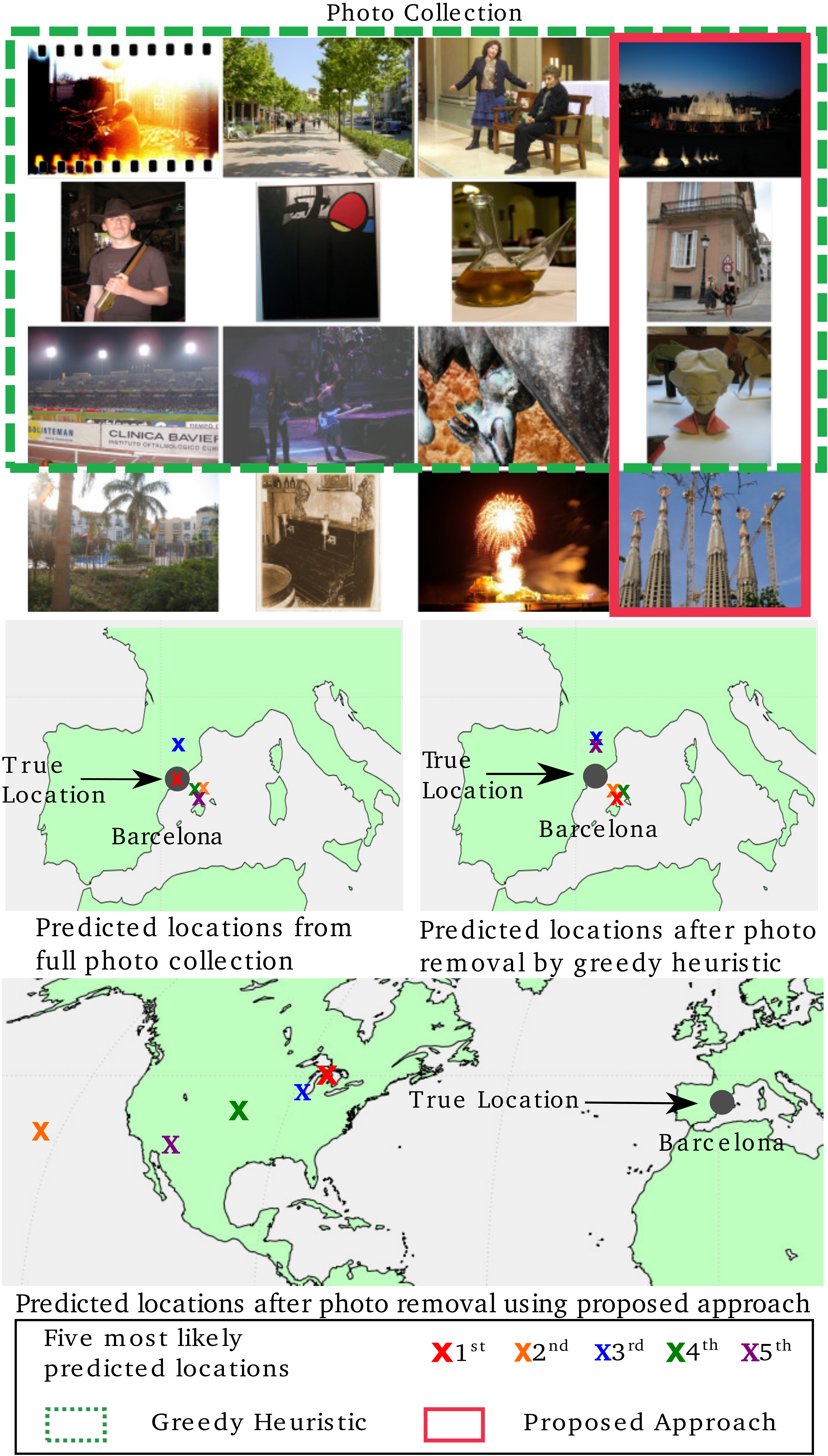}
  \caption{Illustration of our problem and approach.  This photo
    collection has 16 photos, all taken in Barcelona, which is also
    the correctly predicted location when all photos are used (top
    left map).
    Greedily removing photos to ensure that Barcelona is not among
    top-5 predicted locations corresponds to the selection highlighted
    in dashed green, and the resulting top-5 predictions are on the
    top right map.  Photos chosen by our approach are identified by a
    red box, and the result is shown on the lower map.
}
  \label{fig:teaser0}
\end{figure}

A person's physical whereabouts---the city, state, or country they are in or have traveled to---is sensitive information that many desire to keep private. With routine information about everyday activities increasingly being shared online and on social media, there is a heightened risk that such data may unintentionally leak information about the geographic location of individuals. For example, disclosure of a person's current location or travel history to law-enforcement agencies, hostile governments, employers, advertising companies, etc.\ could result in significant real harms. In this work, we specifically address the risk of geolocation privacy from collections of photographs shared by users online.

The IM2GPS method~\cite{Hays:2008:im2gps} was the first to demonstrate that, even without explicit embedded geotags, image content sometimes contains enough cues to allow accurate estimation of the location where the image was taken. The accuracy of image geolocation algorithms has since improved considerably, most recently through the use of deep convolutional neural networks (CNNs)~\cite{Weyand2016}. Nevertheless, geolocation prediction accuracy from individual images is still limited, and the content of a single photograph is typically insufficient to unambiguously resolve the location of the photographer.

However, as we show, geolocation predictions can be significantly more accurate when aggregating information across a \emph{collection} of photographs known to have been taken in the same geographic location (e.g., a person's photo-album from a specific trip, or images from different people known to be traveling together or attending the same event). While individual images can appear to have been taken in any of a large set of possible locations, the intersection of these sets for different images is often sufficient to narrow down the true location of the image collection.

We consider the problem of censoring photo collections in order to preserve geolocation privacy: selecting a minimal set of images to delete from a given collection, such that the true location is not included in the most likely locations predicted from the remaining images. We consider different variants of this task: (1) \emph{defeating top-1 accuracy} while minimizing deletions, to ensure the most likely predicted location is incorrect; (2) \emph{defeating top-$k$ accuracy}, such that the true location is not among the $k$ most likely predicted locations; and (3) \emph{maximizing the rank of the true location} in the list of likely predicted locations, under a fixed budget on the number of deleted images. We consider both \emph{white-box} and \emph{black-box} versions of these tasks.
In the white-box version, the privacy-preserving algorithm has access to the true location classifier.
In the black-box version, in contrast, the algorithm instead uses a proxy classifier.

We show that, except for the top-1 variant, this problem is NP-hard, because the optimal deletion set should not just censor the images that are most indicative of the true location, but also maintain the likelihood of plausible alternatives. We also specifically show that a naive greedy algorithm can be arbitrarily sub-optimal. Then, we propose a polynomial-time algorithm for the top-1 variant and present a mixed-integer linear programming (MILP) formulation for the other variants to allow the use of standard MILP solvers. 

Finally, we present an extensive experimental evaluation of our proposed approaches using images and albums downloaded from Flickr. \footnote{https://www.flickr.com}
In particular, our experiments show that our methods significantly outperform simple greedy deletions in all task variants, and in many cases admit optimal solutions with only a small number of deleted images. Our work thus highlights the risk to geolocation privacy that exists from photo collections commonly shared by users today, while also providing an effective solution that comes at the relatively low cost of censoring a small fraction of images. Our code could be found in the project page \footnote{https://github.com/jinghanY/geoPrivacyAlbum}.

\noindent{\bf Related Work} 
Sharing photographs with embedded location information can allow images to tell a richer story of the places and events they represent. Consequently, many cameras (especially those on mobile phones) allow users the option of embedding GPS coordinates in image EXIF tags.

\citep{Hays:2008:im2gps} were the first to exploit such data to train geolocation predictors, and show that geographic localization is possible from image information alone. Since then, a number of methods have made steady progress on improving the accuracy of image location estimators through the use of deep convolutional neural networks (CNN). \citep{Weyand2016} formulate geolocation as a classification problem, by dividing the globe into a discrete number of cells and assigning a separate label for each. \citep{Howard:2017} demonstrated versions of these classifiers that can run on mobile devices, while \citep{kim2017crn} considered learning representations that exploit contextual information. \citep{Vo:2017} presented a study confirming the efficacy of \citeauthor{Weyand2016}, as well as of using deep network features for localization by retrieval. \citep{Tian2017CrossViewIM} devised a matching-based geolocation approach specifically designed for street-view images.

\citep{Ryan:2017} and \citep{Hal:2013} discuss the dangers of using
geolocation for commercial purposes and the importance of geolocation
privacy. In the security domain, most existing works propose ways to
anonymize location data by replacing the associated name with an
untraceable
ID~\cite{Beresford:2003:LPP:642730.642740,Gruteser:2003:AUL:1066116.1189037}.
Other work addresses geolocation privacy of user
interfaces~\cite{Doty:2010:GPA:1868470.1868485}. \citep{8201d37b4cd54e9b91a390f720178ccd},
\citep{Xiao:2015:PLD:2810103.2813640} and
\citep{Ho:2011:DPL:2071880.2071884} develop mechanisms for preserving
geolocation privacy using differential privacy techniques which add
noise to actual location information; as such, they do not deal with
the issue of inferring location from a photo collection, which is the focus of
our work.

However, these efforts do not address the issue of location inference based on published images and photo collections.
Our work is also broadly related to adversarial machine learning~\cite{Vorobeychik:2018}, although the specific problem and techniques differ considerably. In particular, our focus is not to exploit divergences between learned classifiers and human perception (by creating imperceptible perturbations that fool classifiers~\cite{szegedy2013intriguing,nguyen2015deep}), but rather to censor a small number of images to remove information that can be leveraged for geolocation.

\section{Problem Formulation}
\label{sec:Model}

\subsection{Geolocation Prediction from Images}
Predicting geographic location from a single image is often naturally
cast as a classification task~\cite{Weyand2016}. Possible locations on
the globe are discretized into a discrete set $\mathcal{C} =
[c_1,c_2,\ldots c_M]$ of $M$ geospatial regions, and a classifier is
trained to predict a probability distribution $P(c|X)$ from an input
image $X$ for $c\in\mathcal{C}$. Here, $P(c|X)$ denotes the probability or classifier's belief that the image $X$ was taken in the discretized location $c$.

While the accuracy of per-image location classifiers is still limited,
we consider the effect of aggregating location information across
images. Formally, we consider the task of estimating location from a
\emph{collection} $\mathcal{A} = [X_1, X_2, \ldots X_N]$ of $N$ images
that are known to have all been taken at the same location
$c$. Without making any assumption on the relationship between images
in $\mathcal{A}$ (i.e., treating them as conditionally independent
given $c$), and further assuming that all locations $c\in\mathcal{C}$ are equally likely \emph{a-priori}, the probability distribution of the collection $\mathcal{A}$ is simply given by the product of the individual image distributions:
\begin{equation}
  \label{eq:probmult}
  P(c|\mathcal{A}) = \frac{\prod_{i=1}^N P(c|X_i)}{\sum_{c'\in\mathcal{C}}\prod_{i=1}^N P(c'|X_i)}\cdot
\end{equation}

For convenience, we define $M-$dimensional score vectors $\{S_i\}$
as the log probabilities of locations from individual images, with the
$j$th element defined by $S_i^j = \log P(c_j | X_i)$.
Similarly, we define $S = \sum_{i=1}^N S_i$
as the score corresponding
to the image collection. 
This implies that $S^j = \log P(c_j|\mathcal{A}) + z$,
where 
$z$ is a \emph{scalar} corresponding to the normalization constant in \eqref{eq:probmult} (i.e., it is added to all elements of the vector $S$). Note that since $P(c_{j}|\mathcal{A}) > P(c_{j'}|\mathcal{A}) \Leftrightarrow S^j > S^{j'}$, a location $c^j$ has the same relative rank in both the probability and score vectors.

Thus, the location scores for an image collection are given simply by summing score vectors, as returned by a classifier, from individual images. As our experiments will show, aggregating information in this way from even a relatively small number of photos leads to significant gains in location accuracy compared to single images.

\subsection{Location Privacy by Selecting Images for Deletion}
\newcommand{\pparagraph}{\paragraph}

When using the distributions $P(c|\cdot)$ to make inferences about location, one can choose to look at either the most likely location---i.e., $\arg \max_{c} P(c|\cdot)$---or the set of $k$ most likely locations. Specifically, let $\mathcal{H}(c_j,S)$ define the set of locations that have higher scores in $S$ than a given $c_j$:
\begin{equation}
  \label{eq:hdef}
  \mathcal{H}(c_j,S) = \{j': c_{j'}\in\mathcal{C}, S^{j'} \ge S^j, j \neq j'\}.
\end{equation}
If the true location for a collection is $c_{{{t}}}$, then our location estimate is exactly correct (top-1 accurate) if $|\mathcal{H}(c_{{{t}}},S)| = 0$, and top-$k$ accurate if $|\mathcal{H}(c_{{{t}}},S)| < k$.

Our goal is to develop a privacy protection method that, given an
image set $\mathcal{A}$ and per-image scores $\{S_i\}$, selects a
small set of images $\mathcal{D} \subset \mathcal{A}$ to delete from
$\mathcal{A}$ to yield a censored image set $\mathcal{A'} =
\mathcal{A}\setminus\mathcal{D}$, so that score vector $S'$ from the
remaining images, i.e., $S' = \sum_{i:X_i\in\mathcal{A}'} S_i$, leads
to incorrect location estimates.
Next, we define different variants of this task for different objectives trading-off utility (number of deletions) and privacy.

\noindent{\bf Top-$k$ Guarantee:} The first variant seeks to minimize the number of deletions, while ensuring that the resulting scores from the censored image set push the true location $c_{{{t}}}$ out of the top-$k$ location results, for a specified $k$, i.e.,
\begin{equation}
  \label{eq:topkmind}
  \min_{\mathcal{D}} |\mathcal{D}|, \text{~~s.t.~~} |\mathcal{H}(c_{{{t}}},S')| \geq k.
\end{equation}
A special case of this is when $k=1$, when we wish to
ensure that the true location isn't predicted to be the most likely.

\noindent{\bf Fixed Deletion Budget:} Another variant works with a fixed budget $d$ on the number of images to delete, while trying to maximize the rank of the true label in the resulting score vector $S'$, i.e.,
\begin{equation}
  \label{eq:maxk}
  \max_{\mathcal{D}} |\mathcal{H}(c_{{{t}}},S')|, \text{~~s.t.~~} |\mathcal{D}| \le d.
\end{equation}
Here, we call the resulting value of $|\mathcal{H}(c_{{{t}}},S')|$ for the optimized deletion set as the ``protected-$k$'', since it defeats top-$k$ accuracy for all values of $k$ up to and including $|\mathcal{H}(c_{{{t}}},S')|$.

\noindent{\bf Incorporating User Preference:} In some cases, a user may want to specify a set of images $\mathcal{U}\subset \mathcal{A}$ that \emph{should not} be deleted. In this case, we add the constraint $\mathcal{D} \subset \mathcal{A}\setminus\mathcal{U}$ to the formulations \eqref{eq:topkmind} and \eqref{eq:maxk} above.

\noindent{\bf Black-box Protection:} So far, we have assumed that we have access to the location classifier that will be used for inference, and therefore to the corresponding scores $\{S_i\}$. However, in many real-world settings, our privacy protection algorithm will not have access to the actual classifier. In such cases, we propose using scores $\{S_i\}$ from a proxy classifier, and adding a scalar margin $\theta > 0$ to the scores $S_i^t$ of the true location in each image. These modified scores are used to solve the optimization problems \eqref{eq:topkmind} and \eqref{eq:maxk}, and
capture uncertainty about the true value of the scores $S_i^j$ (we can set $\theta$ such that the score advantage $S_i^t-S_i^j$, of the true location over any other location in a given image from our proxy, is likely to be within $\theta$ of the true advantage).

\section{Computational Hardness}

It is tempting to view the task of selecting images for deletion as computationally easy---that it can be solved simply with a greedy selection of images with highest probabilities or scores for the true location. However, this is not the case since optimal deletion must both reduce the cumulative score of the true location, \emph{and maintain high scores for possible alternate locations}. Deleting an image with high scores for other locations, as well as the true one, can often lead to no change, or even a decrease, in the relative rank of the true location. Conversely, it may sometimes be optimal to delete an image with a moderate true location score, when it is the only one in the collection with a very low score for an otherwise plausible alternate location. In this section, we discuss the sub-optimality of greedy selection, and then prove that optimal deletion is NP-hard.

Consider the greedy selection method that constructs a deletion set $\mathcal{D}$ by selecting images with the highest values of the score $S_i^t$ for the true location $c_t$. To defeat top-$k$ accuracy in \eqref{eq:topkmind} for a given $k$, it sequentially deletes images with decreasing values  $S_i^t$, stopping when $c_t$ is no longer in the top-$k$. Given a fixed budget $d$ for deletions in \eqref{eq:maxk}, it simply deletes the $d$ images with the highest scores $S_i^t$.
\begin{observation}
  Greedy selection can be arbitrarily sub-optimal. Specifically, in cases with just three possible discretized locations (i.e., $|\mathcal{C}|=3$), there exists a set of  score vectors corresponding to log-probabilities for $N+3$ images, $\forall N\geq 1$, where the first $N$ deletions by greedy selection leave the true location as the most likely prediction, whereas deleting two optimally selected images is sufficient to make it the least likely prediction (i.e., defeat top-$2$ accuracy).
  \label{thm:heuristic}
\end{observation}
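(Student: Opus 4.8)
The plan is to prove the claim by exhibiting, for every $N \ge 1$, an explicit instance with three locations ($|\mathcal{C}| = 3$, true location $c_t = c_1$) and $N+3$ images on which greedy squanders its first $N$ deletions while two well-chosen deletions already defeat top-$2$. I would use three kinds of images. First, $N$ identical \emph{decoy} images, each with distribution $(\tfrac35, \tfrac15, \tfrac15)$; these have the largest probability for the true location, so greedy is forced to delete them first. Second, two \emph{key} images $B$ and $C$, with distributions $(\tfrac{1-\delta}{2}, \delta, \tfrac{1-\delta}{2})$ and $(\tfrac{1-\delta}{2}, \tfrac{1-\delta}{2}, \delta)$: image $B$ gives $c_1$ an enormous advantage over $c_2$ while staying neutral between $c_1$ and $c_3$, and $C$ does the symmetric thing for $c_3$. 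Third, a single \emph{anchor} image $A = (a, \tfrac{1-a}{2}, \tfrac{1-a}{2})$ that strongly favors both alternatives over $c_1$. The parameters $\delta, a \in (0,1)$ will be chosen as tiny functions of $N$. Each vector is a genuine probability distribution, and since only the differences $S_i^j - S_i^{j'} = \log(p_i^j/p_i^{j'})$ enter the ranking, these are valid log-probability score vectors $S_i$.

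The verification rests on tracking, for each candidate sub-collection, the two aggregate advantages $\sum_i (S_i^1 - S_i^2)$ and $\sum_i (S_i^1 - S_i^3)$, whose signs decide whether $c_1$ outranks $c_2$ and $c_3$ respectively (this follows from $S^j > S^{j'} \Leftrightarrow \sum_i \log p_i^j > \sum_i \log p_i^{j'}$). The three things to check are: (i) greedy deletes all $N$ decoys first, because $p_{\text{decoy}}^1 = \tfrac35$ exceeds $p_B^1 = p_C^1 = \tfrac{1-\delta}{2} < \tfrac12$ and $p_A^1 = a$; (ii) after deleting any number $m \le N$ of decoys the remaining collection still has $c_1$ on top, since its advantage over $c_2$ equals $(N-m)\log 3 + \log\tfrac{2a}{1-a} + \log\tfrac{1-\delta}{2\delta}$, which I make positive by forcing the core term $\log\tfrac{2a}{1-a} + \log\tfrac{1-\delta}{2\delta} > 0$ (i.e.\ roughly $\delta < a$), and symmetrically for $c_3$; and (iii) deleting $B$ and $C$ leaves $A$ together with all $N$ decoys, whose advantage of $c_1$ over $c_2$ is $N\log 3 + \log\tfrac{2a}{1-a}$, and symmetrically over $c_3$. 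Making (iii) negative pushes $c_1$ below both alternatives, i.e.\ to rank three, defeating top-$2$ with only two deletions, while (i)--(ii) show greedy's first $N$ deletions leave $c_1$ the most likely location.

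The crux, and the step I expect to be the main obstacle, is reconciling (i) with (iii). Because greedy orders images by the true-location score, the decoys must have the largest value of $p_i^1$, which in a three-way distribution forces them to favor $c_1$; hence they contribute a bias of $\Theta(N)$ toward $c_1$ that no two fixed deletions could ever overcome. The resolution is to let the anchor's skew and the key images' margin scale with $N$: I would set $a = \tfrac14 3^{-N}$, so that $\log\tfrac{2a}{1-a} < -N\log 3$ and the left-hand side of (iii) becomes negative, and then $\delta = \tfrac18 3^{-N}$ (any $\delta$ sufficiently below $a$) so that the core term in (ii) stays positive. The remaining work is the routine check that with these choices all of (i), (ii), and the full-collection ordering ($c_1$ on top before any deletion) hold simultaneously, and that $a, \delta$ are valid probabilities; letting $N \to \infty$ then yields the arbitrary sub-optimality.
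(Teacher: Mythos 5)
Your construction is correct and follows essentially the same blueprint as the paper's: $N$ decoy images that greedy wastes its deletions on, two images ($B$, $C$) that each suppress one alternate location by giving it a very low probability, and one anchor image that ranks the true location last once the suppressors are removed --- this is exactly the role played by the paper's uniform images, $X_{N+1},X_{N+2}$, and $X_{N+3}$ respectively. The only real difference is that the paper's decoys are uniform (hence provably inert under deletion, making the verification immediate), whereas your decoys actively favor $c_1$ and so force the exponentially small, $N$-dependent choices of $a$ and $\delta$ to cancel their $\Theta(N)$ bias; your parameter checks ($a>\delta$ for step (ii), $\log\frac{2a}{1-a}<-N\log 3$ for step (iii)) do go through.
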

Broadly, the above set corresponds to $N$ score vectors with uniform scores for all locations ($S_i^j=\log (1/3), \forall j$), and three additional vectors with lower than uniform true-location scores ($S_i^t < \log (1/3)$). From the latter, two vectors each feature a very low score for each of the incorrect locations, respectively. The greedy method selects the images with uniform distributions leading to no change in rank of the true location, while deleting the latter two images makes the true location the least likely prediction. The supplement provides the explicit construction of these score vectors and a complete proof.

Beyond worst-case sub-optimality, our experiments will show that greedy selection also yields poorer results than optimal deletion with our method in practice, with typical score vectors from real images. Figure~\ref{fig:teaser0} shows an example set of sixteen real images where, to achieve a top-$5$ location privacy guarantee, greedy selection requires twelve deletions while the optimal solution requires only four.

We next show that the optimization problems in our privacy formulation are in fact NP-hard.

\begin{theorem}
  The optimization problem in \eqref{eq:topkmind}, to minimize deletions for a top-$k$ guarantee, is NP-hard $\forall k\geq 2$.
  \label{thm:topkmind}
\end{theorem}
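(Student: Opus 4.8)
The plan is to reduce from the NP-complete \textsc{Partition} problem: given positive integers $w_1,\dots,w_n$ with $\sum_i w_i = 2W$, decide whether some $T\subseteq\{1,\dots,n\}$ satisfies $\sum_{i\in T} w_i = W$. The first step is to record a normalization-invariance fact already implicit in the model: adding a scalar to every coordinate of a score vector leaves all relative ranks (and hence $\mathcal{H}$) unchanged, so only the per-image \emph{margins} $m_i^j := S_i^j - S_i^t$ matter, and any desired real margins are realizable as genuine log-probabilities by setting $P(c_j|X_i)\propto \exp(m_i^j)$. Under this reparametrization, a location $c_j$ lies in $\mathcal{H}(c_t,S')$ for a censored set $\mathcal{A}'=\mathcal{A}\setminus\mathcal{D}$ exactly when $\sum_{i:X_i\in\mathcal{A}'} m_i^j \ge 0$. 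I would first prove hardness for $k=2$ and then lift it to all $k\ge 2$.

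For $k=2$ I would use only three locations $\{c_t,c_a,c_b\}$. I would create one ``item'' image per integer $w_i$ with margin pair $(m_i^a,m_i^b)=(w_i,-w_i)$, together with a single ``correction'' image with margins $(-W,W)$. The key observation is that every margin pair lies on the anti-diagonal $m^a+m^b=0$, so the aggregate margin of any kept subset has the form $(\sigma,-\sigma)$; this is coordinatewise nonnegative -- i.e., \emph{both} $c_a$ and $c_b$ beat $c_t$ -- if and only if $\sigma=0$. Since the item contributions along coordinate $a$ are the positive numbers $w_i$ and the only negative contribution is the $-W$ of the correction image, $\sigma=0$ holds for a nonempty kept set precisely when the correction image is retained and the retained items form a subset summing to $W$, that is, exactly when \textsc{Partition} is solvable.

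Translating back to deletions: if \textsc{Partition} is a yes-instance, keeping the correction image plus a witnessing subset $T$ beats both $c_a$ and $c_b$ using only $|\mathcal{D}|=(n+1)-(|T|+1)\le n-1$ deletions; if it is a no-instance, no nonempty kept set places two locations above $c_t$ (a nonzero $\sigma$ pushes exactly one of $c_a,c_b$ above $c_t$), so the only feasible censoring deletes all $n+1$ images, which makes every location tie $c_t$ and hence meets the top-$2$ constraint. Thus the optimum equals $n+1$ in the no-case and is at most $n-1$ in the yes-case, so any polynomial-time solver for \eqref{eq:topkmind} would decide \textsc{Partition}. Note that feasibility is never the obstruction -- deleting everything always yields $|\mathcal{H}(c_t,S')|=M-1$ -- so the hardness lies entirely in the optimal value.

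To obtain hardness for every fixed $k\ge 2$, I would append $k-2$ ``dummy'' locations and give each image margin $+1$ toward all of them; their aggregate margin is then $\ge 0$ for every kept set, including the empty one under the $\ge$ tie convention in \eqref{eq:hdef}, so these locations always beat $c_t$. Reaching $|\mathcal{H}(c_t,S')|\ge k$ therefore still forces both $c_a$ and $c_b$ above $c_t$, and the deletion counts from the $k=2$ analysis carry over verbatim. The step I expect to be the main obstacle is the margin calibration of the second paragraph: unlike a freely chosen covering threshold, the bar for a location to enter $\mathcal{H}$ is fixed by the \emph{total} margin over the whole collection (equivalently, the kept set must have nonnegative aggregate margin), so the real work is engineering the two margin coordinates -- through the anti-diagonal placement and the single correction image -- so that satisfying both constraints simultaneously is equivalent to an exact subset-sum, while verifying that neither the tie convention nor the always-available delete-everything solution lets an optimizer bypass the gadget.
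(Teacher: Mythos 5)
Your proof is correct, but it takes a genuinely different route from the paper's. The paper first establishes an intermediate lemma---that minimizing deletions so that the true location falls below a \emph{prescribed} set of alternate locations is NP-hard---via a reduction from Knapsack (item images carry margins $(v_i,-w_i)$ toward two alternates, plus one correction image with margins $(-V,W)$), and then reduces that lemma to \eqref{eq:topkmind} by suppressing all remaining locations so that a top-$k$ solver is forced to use exactly the prescribed alternates. You instead reduce \textsc{Partition} directly to the optimization problem: the anti-diagonal gadget $(w_i,-w_i)$ together with the $(-W,W)$ correction image forces \emph{both} alternate constraints to be tight simultaneously, turning feasibility of a nontrivial kept set into an exact subset-sum, and you decide \textsc{Partition} by comparing the optimal value against the gap between $n-1$ (yes-instances) and $n+1$ (no-instances). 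Your lift to general $k\ge 2$ via dummy locations with per-image margin $+1$ plays the same role as the paper's padding of the target set, and is cleanly handled by the tie convention in \eqref{eq:hdef}. Two modest advantages of your version: it is self-contained (no fixed-target-set lemma), and the explicit value gap renders the degenerate delete-everything solution harmless, whereas the paper's DTAL argument implicitly assumes at least one image is kept. Both reductions start from weakly NP-complete number problems and both wave the same hand at realizing arbitrary margins as log-probabilities via per-image normalization, so neither is stronger than the other in what it ultimately establishes.
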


\begin{theorem}
The optimization problem in \eqref{eq:maxk}, for maximizing protected-$k$ under a fixed deletion budget, is NP-hard.
  \label{thm:maxk}
\end{theorem}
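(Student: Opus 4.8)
The plan is to establish NP-hardness of \eqref{eq:maxk} by a polynomial-time (Turing) reduction from the top-$k$ minimization problem \eqref{eq:topkmind}, which Theorem~\ref{thm:topkmind} already shows to be NP-hard for every $k\geq 2$. The two problems share exactly the same ground set (the images $\mathcal{A}$), the same per-image scores $\{S_i\}$, the same true location $c_t$, and the same objective quantity $|\mathcal{H}(c_t,S')|$; they differ only in which of ``budget'' and ``target rank'' is constrained and which is optimized. I would exploit this structural duality directly rather than build a fresh gadget.

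Concretely, for a fixed instance (scores $\{S_i\}$ and true location $c_t$) define the value function $g(d) := \max_{\mathcal{D}:\,|\mathcal{D}|\le d} |\mathcal{H}(c_t,S')|$, i.e.\ the optimum of \eqref{eq:maxk} as a function of the deletion budget $d$. The first key step is the elementary observation that $g$ is non-decreasing in $d$ (a larger budget only enlarges the feasible set of deletion sets), and that $d$ ranges over the integers $0,1,\ldots,N$. The second step is to note that the optimal number of deletions for the top-$k$ guarantee in \eqref{eq:topkmind} is precisely the smallest budget that attains rank $k$, namely $d^\star = \min\{\, d : g(d)\ge k \,\}$; this holds because a deletion set of size $d$ pushes $c_t$ out of the top-$k$ exactly when $|\mathcal{H}(c_t,S')|\ge k$, and \eqref{eq:maxk} maximizes precisely this quantity.

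Given these two facts, a single oracle (a hypothetical polynomial-time solver) for \eqref{eq:maxk} lets me recover $d^\star$, and hence solve \eqref{eq:topkmind}, by evaluating $g(0),g(1),\ldots$ until the threshold $k$ is first met --- at most $N+1$ calls, or $O(\log N)$ calls by binary search using monotonicity of $g$. Fixing any $k\ge 2$ (for which Theorem~\ref{thm:topkmind} guarantees NP-hardness), this shows that a polynomial-time algorithm for \eqref{eq:maxk} would yield one for the NP-hard problem \eqref{eq:topkmind}, so \eqref{eq:maxk} is itself NP-hard.

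I expect the only real subtlety --- the ``main obstacle'' --- to be arguing rigorously that no instance transformation is needed, i.e.\ that the decision versions of the two problems literally coincide (``is there $\mathcal{D}$ with $|\mathcal{D}|\le b$ and $|\mathcal{H}(c_t,S')|\ge \kappa$?''), so that the budget-to-target inversion via monotonicity of $g$ is legitimate and adds only polynomial overhead. A completely self-contained alternative, should a many-one reduction be preferred, would be to replay the gadget used for Theorem~\ref{thm:topkmind} and read off the maximum achievable rank under the corresponding budget; but reusing Theorem~\ref{thm:topkmind} through the monotone value function $g$ is the cleaner route and the one I would pursue.
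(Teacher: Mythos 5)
Your proposal is correct and takes essentially the same route as the paper: a polynomial-time Turing reduction from the top-$k$ guarantee problem of Theorem~\ref{thm:topkmind}, calling a hypothetical fixed-budget solver over increasing budgets $d=0,1,\ldots,N$ and stopping when the returned protected-$k$ reaches $k$. Your explicit monotonicity argument for $g(d)$ and the binary-search refinement are minor elaborations of the paper's sequential-calls argument, not a different approach.
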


We begin by introducing and proving a lemma that relates to the NP-hardness of optimal deletion for making the true location less likely than a given set of alternate locations. We then use this lemma to prove Theorem \ref{thm:topkmind}, and subsequently, prove Theorem \ref{thm:maxk}.

\begin{lemma}
  Given an image collection $\mathcal{A}$ with true label $c_t$, and a target set $\bar{\mathcal{H}} \subset \mathcal{C}, c_t \notin |\bar{\mathcal{H}}|$  of alternate locations, with fixed size $k=|\bar{\mathcal{H}}|$, it is NP-hard $\forall k\geq 2$ to find the smallest deletion set $\mathcal{D} \subset \mathcal{A}~$, s.t. ${S'}^t \leq {S'}^j, \forall c_j \in \bar{\mathcal{H}}$, where $S'$ are score vectors for $\mathcal{A}'=\mathcal{A}\setminus\mathcal{D}$.
  \label{thm:fixed}
\end{lemma}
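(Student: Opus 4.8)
The plan is to recast the condition and reduce from \emph{Equal\nobreakdash-Cardinality Partition}. First I would rewrite the requirement ${S'}^t \le {S'}^j$ for every $c_j\in\bar{\mathcal{H}}$ in terms of the per-image \emph{advantage} of the true location, $a_i^j = S_i^t - S_i^j$. Writing $\mathcal{R} = \mathcal{A}\setminus\mathcal{D}$ for the retained images, the condition becomes $\sum_{i:X_i\in\mathcal{R}} a_i^j \le 0$ for each $c_j\in\bar{\mathcal{H}}$, so minimizing $|\mathcal{D}|$ is exactly maximizing $|\mathcal{R}|$ subject to $k$ packing constraints with right-hand side zero. Two observations make this convenient: (i) since we may append arbitrarily many ``dummy'' locations to $\mathcal{C}$ and place probability mass on them, any real-valued advantage vectors $(a_i^j)$ are realizable as genuine log-probability scores (after a harmless positive rescaling keeping each $\exp S_i^j\le 1$ and the per-image probabilities summing to one); and (ii) it suffices to prove hardness for $k=2$, because $k-2$ extra target locations whose advantages $a_i^j$ are hugely negative for every image yield non-binding constraints, reducing the general case to two coordinates.

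Next I would reduce from Equal\nobreakdash-Cardinality Partition (ECP): given $2p$ positive integers $w_1,\dots,w_{2p}$ with even total $W=\sum_i w_i$, decide whether some index set of size exactly $p$ sums to exactly $W/2$; this is a standard NP-complete variant of \textsc{Partition}. Take $\bar{\mathcal{H}}=\{c_1,c_2\}$. I build $2p$ ``weight'' images with advantage vectors $(a_i^1,a_i^2)=(w_i,\,W-w_i)$, together with a single ``anchor'' image $g$ with $(a_g^1,a_g^2)=(-W/2,\,-(pW-W/2))$. Because both anchor coordinates are strictly negative, adding $g$ to any feasible retained set only slackens both constraints while raising the cardinality, so some optimal $\mathcal{R}$ contains $g$; conversely any $\mathcal{R}$ omitting $g$ is forced to be empty, since the first constraint restricted to positive weights reads $\sum_{i:X_i\in\mathcal{R}} w_i\le 0$. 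Hence I may assume $\mathcal{R}=\{g\}\cup S$ for a weight-subset $S$.

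The heart of the argument is then a short calculation showing the two coordinates pin down $S$. Substituting the anchor, the first constraint reads $\sum_{i\in S} w_i \le W/2$ and the second reads $\sum_{i\in S} w_i \ge (|S|-p)W + W/2$. For $|S|=p$ these collapse to $\sum_{i\in S} w_i = W/2$; for $|S|\ge p+1$ the lower bound exceeds $W\ge\sum_{i\in S}w_i$ and is infeasible; and for $|S|\le p-1$ both are easily satisfied. Consequently the maximum attainable is $|\mathcal{R}|=p+1$ precisely when the ECP instance is a yes-instance, and $|\mathcal{R}|\le p$ otherwise; equivalently, the smallest deletion set satisfies $|\mathcal{D}|\le p$ iff ECP admits a solution. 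Since the construction, together with the padding for $k>2$, is polynomial, this establishes NP-hardness for all $k\ge 2$.

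\textbf{Main obstacle.} The delicate point is not the bookkeeping but engineering genuine two-dimensional tension: a single packing constraint (the $k=1$ case) is solved greedily, so the reduction must make the two coordinates \emph{compete}, so that satisfying both simultaneously encodes an exact-cardinality, exact-sum subset. The complementary pairing $(w_i,\,W-w_i)$ together with the precisely tuned anchor is what converts the two ``$\le$'' budgets into a forced equality; I expect the main care to go into verifying that the right-hand sides are exactly those dictated by the zero-threshold reformulation (and are not free parameters we could simply choose) and that log-probability realizability is preserved throughout.
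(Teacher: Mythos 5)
Your reduction is correct, but it takes a genuinely different route from the paper's. The paper reduces from the Knapsack decision problem: it builds one image per object with scores $(S_i^t,S_i^p,S_i^q)=(0,v_i,-w_i)$ plus a single anchor image with scores $(0,-V,W)$, and shows that even the \emph{unbudgeted feasibility} question (does any proper deletion set make $c_t$ lose to both $c_p$ and $c_q$?) is already NP-complete; hardness of the minimization then follows immediately by taking the size bound to be trivial. Your construction instead reduces from Equal-Cardinality Partition and places all the hardness in the \emph{budget threshold}: feasibility is trivial in your instance (retain only the anchor $g$), and the complementary pairing $(w_i,\,W-w_i)$ plus the tuned anchor force any retained set of size $p+1$ to encode an exact equal-cardinality partition. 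Both are sound; the paper's buys a slightly stronger structural statement (hardness is not an artifact of optimizing cardinality, since mere feasibility is hard) with less arithmetic, while yours directly establishes hardness of the budgeted decision version and exploits the cardinality structure of ECP to match the cardinality objective --- at the cost of needing the (standard, but worth citing, e.g.\ Garey--Johnson) fact that the equal-cardinality variant of \textsc{Partition} is NP-complete. Two minor points: your remark that both constraints are ``easily satisfied'' for $|S|\le p-1$ is false in general (a single weight exceeding $W/2$ violates the first constraint), but this is harmless since that case is irrelevant to whether $|\mathcal{R}|=p+1$ is attainable; and your handling of realizability and of $k>2$ by padding with non-binding targets matches the paper's in substance (the paper's text says to set the padded scores to $-\infty$, which has the sign backwards --- your direction, hugely \emph{positive} scores for the padded targets, is the right one).
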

\begin{definition}Deletion for Two Alternate Locations (DTAL).
{\bf Given}: a collection $\mathcal{A}$, pair of alternate labels $\bar{\mathcal{H}}=\{c_p,c_q\}$, and true label $c_t\notin\bar{\mathcal{H}}$. {\bf Question}: is there any deletion set $\mathcal{D} \subset \mathcal{A}~$, s.t. $~{S'}^t \leq {S'}^j, \forall c_j \in \bar{\mathcal{H}}$ ?
\end{definition}
\noindent Note that although scores correspond to log-probabilities in our formulation, we place no restrictions on $S_i^j$ above, since given arbitrary score values, we can subtract an image-specific constant $\log \sum_j \exp(S_i^j)$ from the scores for \emph{all} labels to yield valid log-probabilities. This results in an identical decision problem, as the same constant occurs on both sides of all inequality constraints.

\begin{definition}Knapsack Problem (KP).
{\bf Given}: Values $\{v_i > 0\}$ and weights $\{w_i > 0\}$ for a set of objects $\mathcal{O}$, with a capacity constraint $W$ and value bound $V$. {\bf Question}: Is there a subset of objects $\mathcal{Q} \subseteq \mathcal{O}$ with total weight at most $W > 0$ and total value at least $V > 0$ ?
\end{definition}

\begin{proof}[Proof of Lemma \ref{thm:fixed}]
  It is easy to see DTAL is in NP. We prove DTAL is NP-complete by reduction from KP. Given an instance of KP, we form an instance of DTAL by creating a collection $\mathcal{A}$ with $|\mathcal{O}|+1$ images. We set $S_i^t=0$ for all images. For $1\leq i \leq |\mathcal{O}|$, we set the scores $S_i^p=v_i$ and $S_i^q=-w_i$. For the final image with $i=|\mathcal{O}+1|$, we set $S_i^p=-V$ and  $S_i^q=W$. Any deletion set returned by DTAL can not include the last image $X_{|\mathcal{O}|+1}$, since $S_i^t > S_i^q$ for all other images. Given this solution, we can get a corresponding solution for KP as the set of all objects for which the corresponding image $X_i$ was not included in the deletion set. Similarly, a solution of the KP instance corresponds to a solution of DTAL, where the deletion set includes images corresponding to all objects not included in the KP solution.

  Since DTAL---which asks if a deletion set, without any constraint on its size---is NP-complete, it follows that the task of determining if a deletion set of bounded size exists is also NP-complete. The latter corresponds to the decision version of the optimization problem in Lemma \ref{thm:fixed} for $k=2$. Therefore, the optimization in Lemma \ref{thm:fixed} is NP-hard for $k=2$. It is then easy to show the problem is NP-hard also for any $k>2$, since if there were a polynomial-time solver for $k>2$, we could call it to solve the $k=2$ version by setting the scores for all but two locations to $-\infty$, in all images.
\end{proof}

\begin{proof}[Proof of Theorem \ref{thm:topkmind}]
The optimization problem in Theorem \ref{thm:topkmind} requires making a similar decision as in Lemma \ref{thm:fixed}, except that it also requires determining the best set of alternate labels $\bar{\mathcal{H}}$ where $|\bar{\mathcal{H}}|=k, c_t \notin \bar{\mathcal{H}}$. It is easy to see that this is at least as hard as the fixed alternate target set variant. Specifically, if we have a solver for the problem in Theorem \ref{thm:topkmind}, we can use it to solve the problem in Lemma \ref{thm:fixed} by simply setting the scores $S_i^j = \infty, \forall i, \forall j: j \neq c_t, j \notin \bar{\mathcal{H}}$.
\end{proof}

\begin{proof}[Proof of Theorem \ref{thm:maxk}]
The fixed deletion budget optimization problem in Theorem \ref{thm:maxk} is at least as hard as the top-$k$ guarantee problem in  Theorem \ref{thm:topkmind}. If we have a solver for the fixed budget problem, we can call that solver sequentially with deletion budget $d = (1, 2,\ldots N)$, and stop whenever the returned protected-$k$ satisfies the desired top-$k$ guarantee.
\end{proof}

\section{Proposed Approach}

We now present algorithms for solving the privacy-preserving optimization problems in our formulation. We begin with a generic approach that maps our objectives and constraints to a mixed integer linear program (MILP). This formulation is naturally also NP-hard,  but it allows us to use standard MILP solvers that use various heuristics to find a solution efficiently. Finally, we had shown that the top-k guarantee problem is NP-hard $\forall k\geq 2$. For the important special case $k=1$, we show that this problem can in fact be solved in polynomial time.t

\subsection{Mixed Integer Linear Programming}
Given a collection $\mathcal{A}$ with set of locations $\mathcal{C}$, true location $c_t$, and score vectors $\{S_i\}$, we let $z_i\in\{0,1\}$ denote whether a photo is deleted ($z_i = 0$) or kept ($z_i = 1$).  
We let $h_j \in \{0,1\}, \forall j\neq t$ denote whether location $c_j$ has a lower ($h_j=0$) or higher ($h_j=1$) score ${S'}^j$ than that of the true location ${S'}^t$ in the censored collection $\mathcal{A}'$. We enforce consistency of these values of $h_j$ to the assignment $z_i$ by adding the constraints:
\begin{equation}
  \label{eq:hjdef}
  h_j \times \left( \sum_i z_i \left( S_i^j - S_t^t \right) \right) \geq 0,~~\forall j \neq t.
\end{equation}

Under these definitions, the protected-$k$ of a given assignment is given by $\sum_{j\neq t} h_j$, and the number of deletions by $\sum_i (1-z_i)$. 
Thus, our problem variant \eqref{eq:topkmind} where we minimize the number of deleted photos that ensure top-$k$ location privacy corresponds to solving
\begin{equation}
  \label{eq:topkilp}
  \max_{\{z_i\},\{h_j\}} \sum_i z_i,\qquad \text{s.t~~} \sum_{j\neq t}h_j \geq k,
\end{equation}
since minimizing the number of deleted photos is equivalent to maximizing the number of photos remaining in the censored collection.
Analogously, the problem of maximizing protected-$k$ with a fixed deletion budget $d$ corresponds to:
\begin{equation}
  \label{eq:maxkilp}
  \max_{\{z_i\},\{h_j\}} \sum_{j\neq t}h_j, \qquad \text{s.t~~} \sum_i (1-z_i) \leq d.
\end{equation}
Both problems above are over the binary variables $z_i\in\{0,1\}, h_j\in\{0,1\}$, under the additional constraints in \eqref{eq:hjdef}. 

Note that the formulations above involve bilinear constraints \eqref{eq:hjdef}, as these feature products of $h_j$ and $z_i$.
However, since these are both binary variables, it is straightforward to linearize the constraints using McCormick inequalities (see the supplement for details).

\noindent\textbf{User Preference:} User specified constraints that a set of images $\mathcal{U}$ not be deleted can be incorporated simply by adding the constraints $z_i=1, ~~~\forall X_i\in\mathcal{U}$ to the MILP.

\noindent\textbf{Black-Box Protection:} The margin $\theta$ described in Sec.~\ref{sec:Model}, that accounts for discrepancies between the proxy and un-known true classifiers, can also be incorporated in our MILP by simply modifying the constraints in \eqref{eq:hjdef} above to:
\begin{equation}
  \label{eq:hjdef2}
  h_j \times \left( \sum_i z_i \left( S_i^j - S_t^t - \theta \right) \right) \geq 0,~~\forall j \neq t.
\end{equation}

\subsection{Polynomial-Time Algorithm for top-1 Guarantee}

As we've shown, the optimization problems in our formulation are NP-hard for $k\geq 2$. Conversely, the special case of minimizing deletions for top-1 accuracy admits a solution in polynomial time. We begin by noting that, for this case, we must only ensure that the true location $c_t$ has a lower score than \emph{any} other single location $c_j, j\neq t$. 

First, note that for a specific choice of $c_j$, the assignment $\bar{z}^j_i \in \{0,1\}$ with fewest deletions that ensures $c_j$ has a higher score than $c_t$ is given by
\begin{equation}
  \label{eq:top1m}
  \{\bar{z}^j_i\} = \arg \max_{\{z_i\}} \sum_i z_i,{~~s.t.~~} \sum_i z_i (S^j_i - S^t_i) \geq 0,
\end{equation}
where $\bar{z}^j_i=0$ indicates that $X_i$ is deleted, as previously.
This can be solved by sorting the images $X_i$ in descending order of the score advantage $(S^t_i-S^j_i)$ of the true location $c_t$ over $c_j$, and deleting images (setting $\bar{z}^j_i=0$) sequentially till the constraint above is satisfied. The solution to the original problem is then given by searching over all possible $c_j \neq c_t$, i.e., $\min_{j\neq t} (\sum_i 1-\bar{z}^j_i)$.
Thus, we solve \eqref{eq:top1m} for each possible alternate location $c_j\neq c_t$, compare the number of deletions required for each, and select the answer with the fewest deletions. The overall algorithm runs in $O(MN\log N)$ time (where $N=|\mathcal{A}|$ is the number of images, and $M=|\mathcal{C}|$ the number of locations). The full algorithm is included in the supplement.

\section{Experiments}

We now present an extensive evaluation of our method for all variants of our geolocation privacy problem, measuring their success at preserving privacy against a state-of-the-art deep neural network-based location classifier with experiments on real photographs downloaded from Flickr.

\subsection{Preliminaries}

\noindent{\bf Datasets and Discretized Locations:} We conduct experiments on a set of 750k images downloaded from Flickr, where we ensure each image has a geotag to specify ground-truth location. We randomly split these into training and test sets, of size $90\%$ and $10\%$ of total images respectively. Following \citep{Weyand2016}, we cast geolocation as a multi-class classification problem over a set of discretized locations $\mathcal{C}$. This set is formed by an adaptive scheme to partition the globe into contiguous cells, such that each cell contains the same number of training images. This means that cell sizes vary depending on the density of photos (and indirectly, of population) in the corresponding region (e.g., oceans tend to inhabit large cells, while large metropolitan areas may be partitioned among multiple cells). We use a discretization into 512 cells (i.e., $|\mathcal{C}|=512$).

\noindent{\bf Photo Collections:} Photo collections for evaluation are constructed by randomly forming sets of images with the same discretized location from the test set. We consider collections of size 16, 32, 64, and 128, where collections for each size are formed as 10 random partitions of the entire test-set. This gives us roughly 45k collections of size 16, 21k of size 32, 10k of size 16, and 5k of size 128. Further, we collected 10k Flickr user designated ``albums'', each with images all from the same discretized location, and with no image overlap with the training set. The album sizes ranged between 16 and 250 (33 photos on average), with all 512 locations represented. In our evaluations, we treat these albums as photo collections, although images in photo albums tend to be more correlated than random collections.

\noindent{\bf Location Classifier:} We use a deep convolutional neural network---with a standard VGG-16~\cite{Vo:2017} architecture---to produce the probability distribution over locations from a single input image (the last layer of the network is modified from the standard architecture to have 512 channels, to match the size of $\mathcal{C}$). We initialize the weights of all but the last layer of the network from a model trained for ImageNet~\cite{imagenet} classification, and then train the network with a standard cross-entropy loss on the entire training set (except for the results on the black-box setting, as described later).

\subsection {Geolocation Performance of Images vs Collections}

We begin with experiments that demonstrate that the threat to location
privacy is significantly heightened when aggregating information
across multiple images. We evaluate per-image location accuracy from
our trained classifier in Table~\ref{table:full_accuracy}, and compare
it to accuracies of predictions from different collections using
\eqref{eq:probmult} as a collection-based classifier. While
performance is relatively modest with single images, it increases
significantly when we use photo collections, even with only 16
images, yielding a significant privacy risk.

Note that accuracies for user designated albums are considerably lower
because images in these albums are often correlated with similar
content in many images, since most albums have images all taken by a single photographer and in similar environs. Nevertheless, some user albums may still pose a privacy risk, and our top-$k$ guarantee approach can be used on these, since it would delete photos only in albums that compromised location privacy.

\begin{table}[!t]
\centering
\begin{tabular}{lccc}  
\toprule
Collection Type   & top-1 &  top-5 & protected-$k$\\
\midrule
Single Image  & 0.20  & 0.38 & 59.58 \\
R-16        & 0.97  & 1.00 & 0.06\\
R-32        & 0.99  & 1.00 & 0.01 \\
R-64        & 1.00 & 1.00 & 0.00 \\
R-128      &  1.00 & 1.00 & 0.00 \\
Albums      &  0.20 & 0.39 & 53.51 \\
\bottomrule
\end{tabular}
\caption{Geolocation performance on single images vs image collections. We show top-1 and top-5  accuracies (fraction of instances when the true location is in the top 1 or 5 predictions), and average protected-$k$ (where rank of true location is $k+1$) for all cases. Note lower protected-k represents more accurate geolocation. Here R-N refers to randomly constructed collections of size N, while Albums corresponds to user designated photo albums.}
\label{table:full_accuracy}
\end{table}

\subsection{Top-$k$ Guarantee with Minimum Deletions}
We now consider the effectiveness of our approach, starting with the
problem in \eqref{eq:topkmind}, where the goal is to minimize the
number of deleted photos while ensuring that the true location is not in the top-$k$ most likely predictions. We present these results in Table~\ref{table:untargeted}, for $k=1$ and $5$.

We find that when using our approach, we typically need to delete only a modest $\sim$30\% of the photos on average to achieve a top-1 guarantee, and $< 50\%$ of photos for the stronger top-5 guarantee---note that these averages are computed only over collections where the original predictions (prior to any deletion) were already accurate in the top-1 and top-5 sense, respectively. We also see that the optimal solution from our approach always requires fewer deletions than greedy selection---usually by a significant margin.

\begin{table}[!t]
\centering
\begin{tabular}{lcccc}  
\toprule
Collection& \multicolumn{2}{c}{top-1} & \multicolumn{2}{c}{top-5}\\ 
Type  & Optimal & Greedy & Optimal & Greedy\\
\midrule
R-16          & 0.32 & 0.37 & 0.45 & 0.53\\
R-32          & 0.31 & 0.43 & 0.47 & 0.59\\
R-64          & 0.31 & 0.49 & 0.48 & 0.64\\
R-128         & 0.31 & 0.53 & 0.49 & 0.68\\
Albums         &0.21 & 0.33 & 0.31 & 0.39 \\
\bottomrule
\end{tabular}
\caption{Average fraction of deleted photos selected to achieve top-1 and top-5 geolocation privacy. We compute averages only over instances where predictions on the original whole collection was accurate in the top-1 / top-5 sense.}
\label{table:untargeted}
\end{table}

Thus, our method ensures location privacy while still allowing users to share a significant fraction of their images.

\subsection{Maximum protected-$k$ under Deletion Budget}

Next, we consider our approach for our second problem variant in \eqref{eq:maxk} where, given a fixed budget on the number of images to delete from each collection, the goal is to maximize the value of $k$ such that the true location is not in the top-$k$ predictions. Table~\ref{tab:maxk} reports results for this setting on our various collection types, from optimal selection with our approach as well as from the greedy selection method (this time, averaging over all collection instances).

We find that again, optimal selection with our approach yields
significantly better privacy protection than the greedy
baseline. Moreover, the same \emph{fraction} of deleted images
provides greater protection when the size of the collection is
small. Note that average protected-$k$ values are higher for the user
albums, because the corresponding values for the uncensored
collections were already high (Table~\ref{table:full_accuracy}), so
much so that it is unnecessary to consider deletion budget above 25\%,
with $k$ already above 100 at that point (hence the blank entries in
the table).

\begin{table}[!t]
\centering
\begin{tabular}{llccccc}
\toprule
  \multicolumn{1}{l}{{Coll. }} & & \multicolumn{4}{c}{Deletion Budget ($\%$ Photos)} \\

  {Type} & Method & ${12.5\%}$ & ${25\%}$ & ${37.5\%}$ & ${50\%}$ \\ 

  \midrule
  \multirow{2}{*}{R-16}    & Optimal &  0.67 & 3.00 & 10.20 & 29.12 \\
      & Greedy  & 0.46 & 1.88 & 6.22 & 17.67\\
  \midrule
  \multirow{2}{*}{R-32}    & Optimal &  0.17 & 1.14 & 4.84& 17.49\\
      & Greedy  & 0.07 & 0.42 & 1.90& 7.15 \\
  \midrule
  \multirow{2}{*}{R-64}   & Optimal &  0.05 & 0.57 & 2.87 & 10.20\\
     & Greedy  & 0.02 & 0.09 & 0.57 & 2.80   \\
  \midrule
  \multirow{2}{*}{R-128}  & Optimal &  0.03 & 0.38 & 2.06 & 8.07\\
    & Greedy & 0.01 & 0.03 & 0.21 & 1.29  \\
  \midrule
  \multirow{2}{*}{Albums}   & Optimal & 77.09 & 103.18 & - &-\\
     & Greedy & 70.78 & 86.31 & - & - \\
\bottomrule
\end{tabular}
\caption{Average protected-$k$, such that the true label is not in top-$k$, for different deletion budgets as percentages of collection size. Note higher protected-$k$ implies greater privacy.}
\label{tab:maxk}
\end{table}

\begin{table}[!t]
\centering
\begin{tabular}{lcc}

\toprule
  Coll. & \multicolumn{2}{c}{{Deletion Budgets}} \\

  Type & {12.5\%$\rightarrow$25\%} & {25\%$\rightarrow$50\%} \\ 

\midrule
  R-16      & 0.27 & 0.10   \\

  \midrule
  R-32    & 0.32 & 0.15  \\

  \midrule
  R-64   & 0.34 & 0.19  \\

  \midrule
  R-128  & 0.33 & 0.22  \\
\bottomrule
\end{tabular}
\caption{Fraction of images that were deleted with a lower-deletion budget, but NOT with a higher one.}
\label{tab:maxk2}
\end{table}

In addition, we check if the photos deleted for
a given collection for one deletion budget, are also deleted given a
larger  budget. As our results in Table~\ref{tab:maxk2} indicate, this
is not the case: when more deletions are possible, the optimal
solution may involve keeping some of the photos that would have been
deleted under a lower budget. This indicates that any sequential greedy strategy is likely sub-optimal.

\begin{figure*}[!t]
 \centering
  \begin{tabular}{ccccc}
    \rotatebox{90}{~~~~~~~~ ~~~~~~Top-1}
    & \includegraphics[width=0.45\columnwidth]{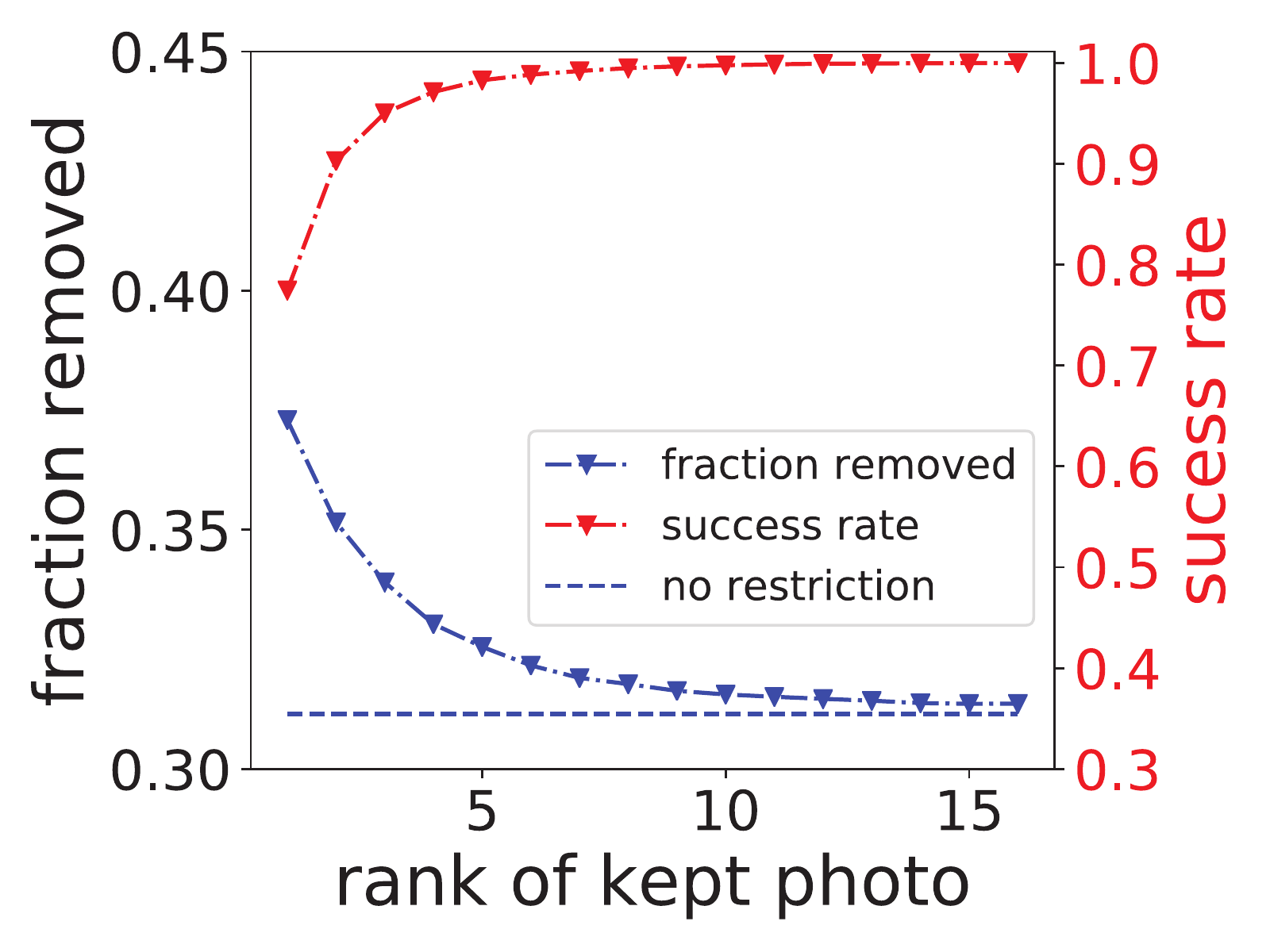}
    & \includegraphics[width=0.45\columnwidth]{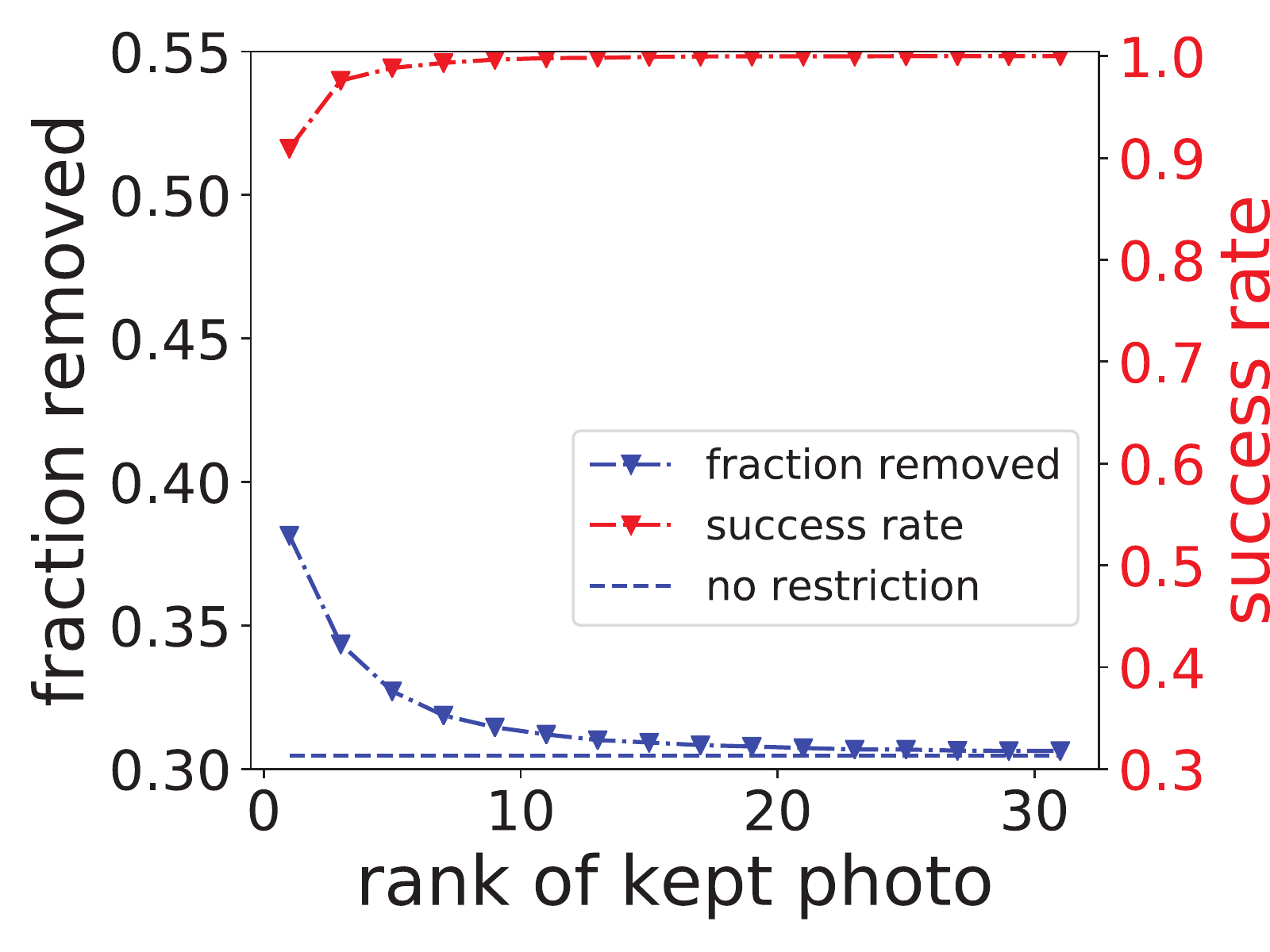}
    & \includegraphics[width=0.45\columnwidth]{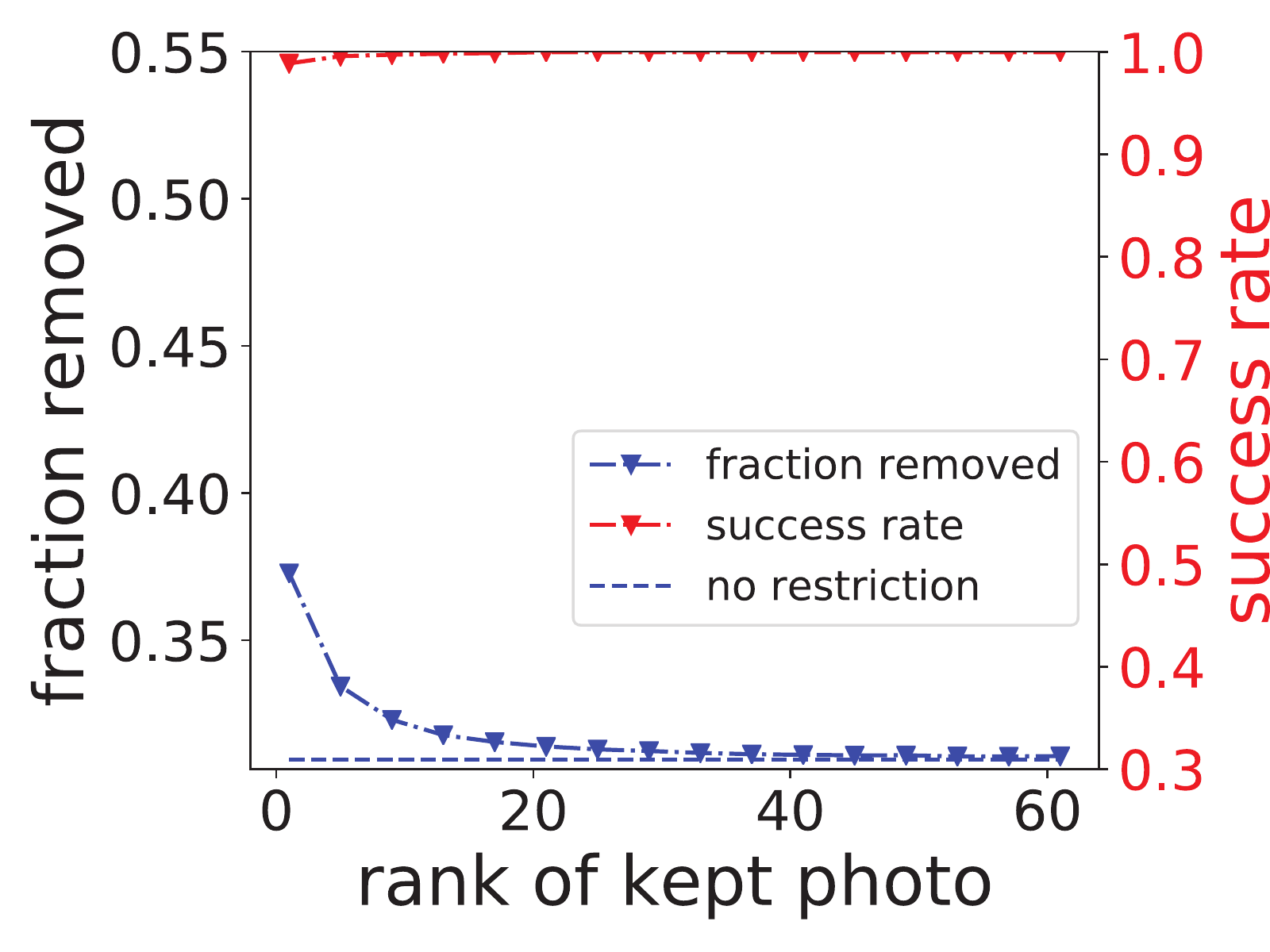}
    & \includegraphics[width=0.45\columnwidth]{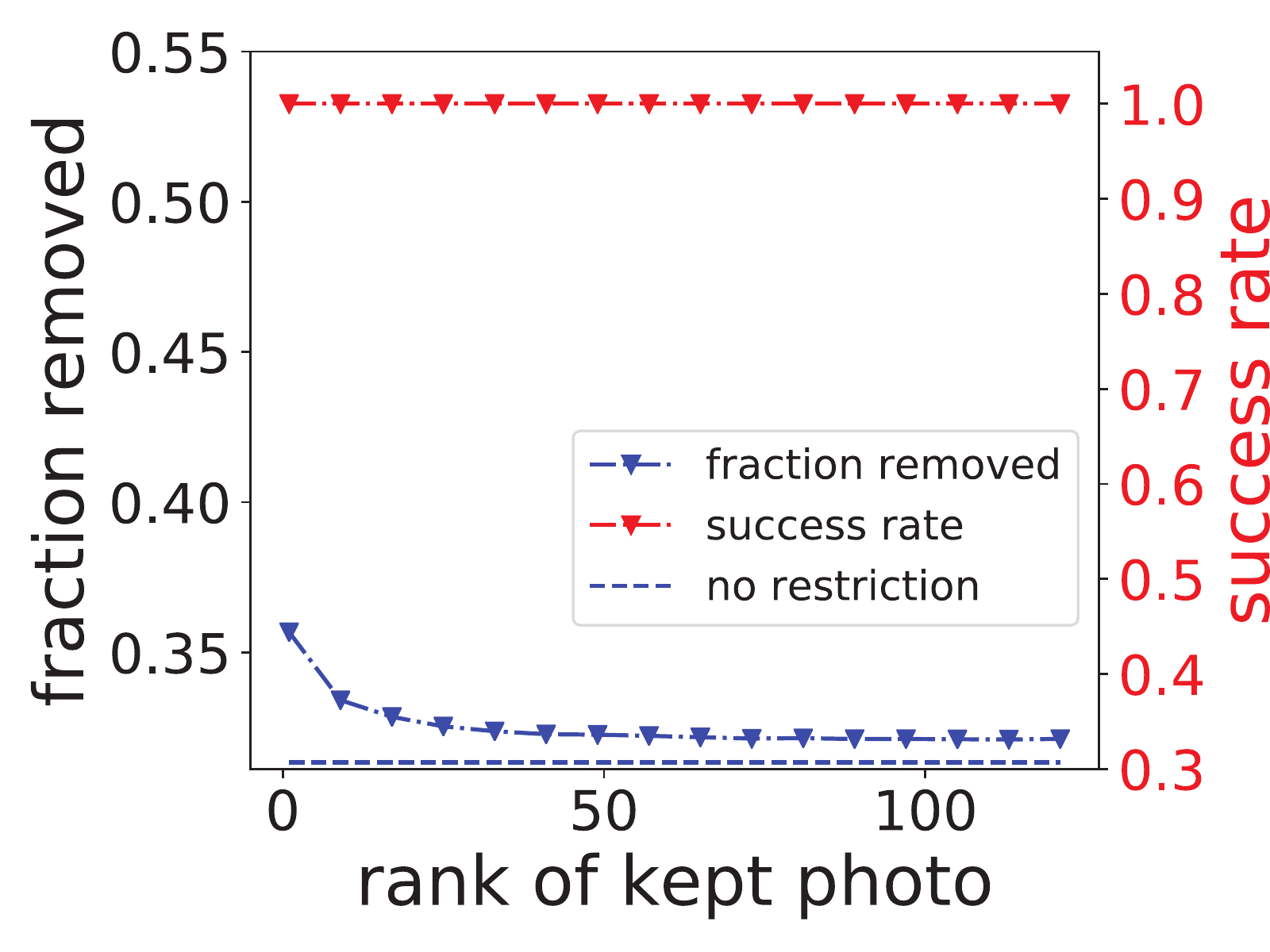}\\
    \rotatebox{90}{~~~~~~~~ ~~~~~~Top-5}
    & \includegraphics[width=0.45\columnwidth]{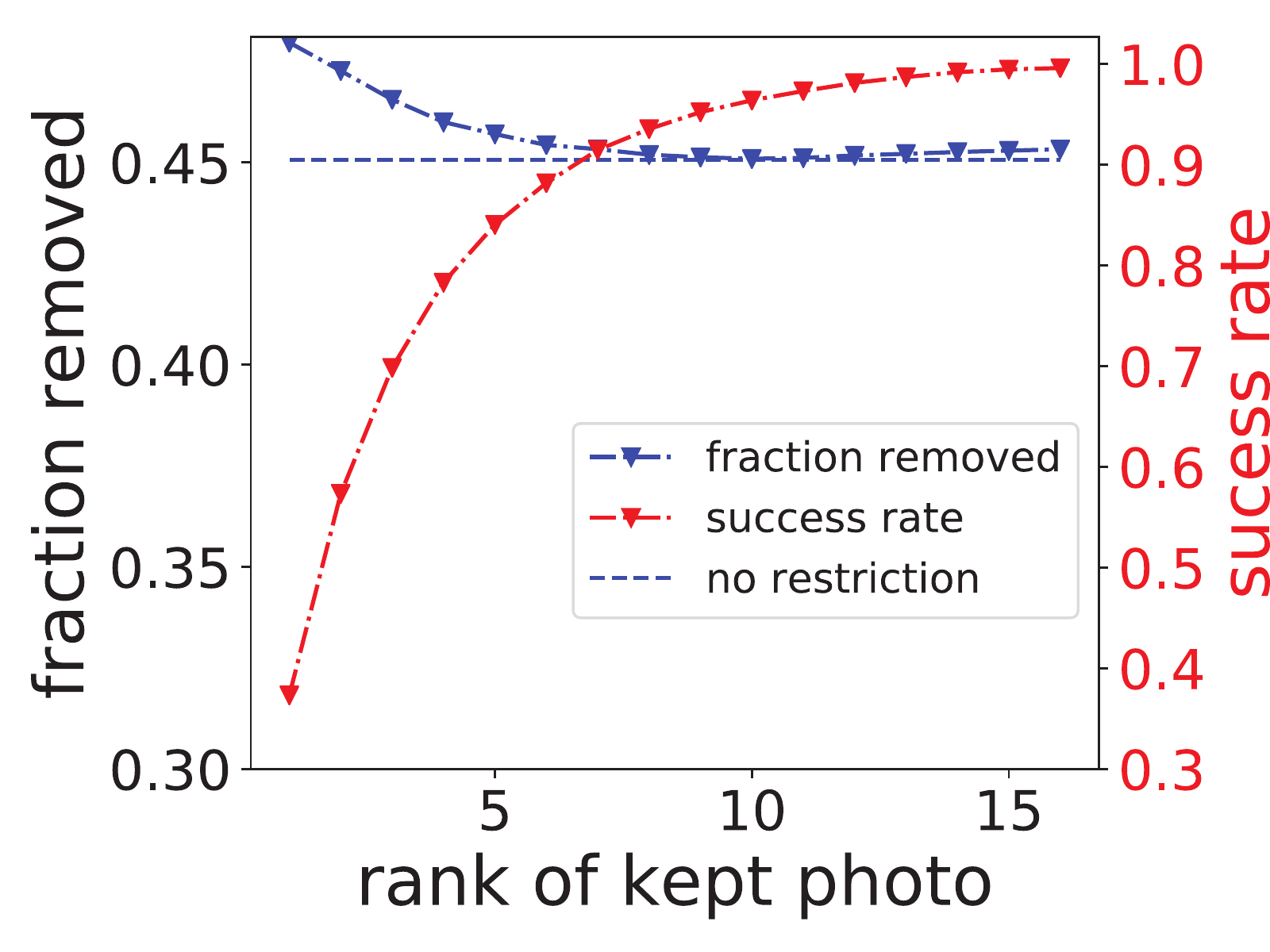}
    & \includegraphics[width=0.45\columnwidth]{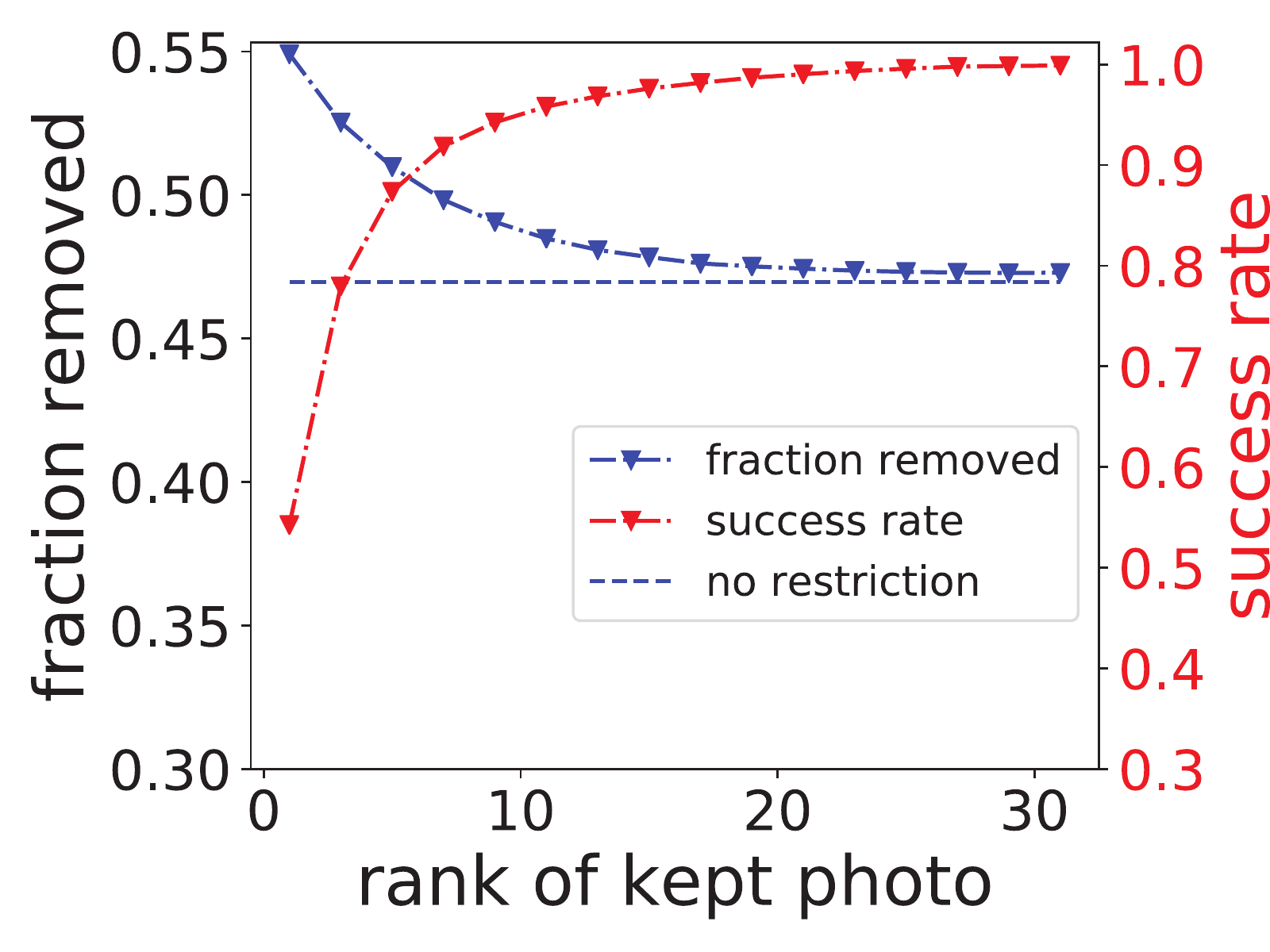}
    & \includegraphics[width=0.45\columnwidth]{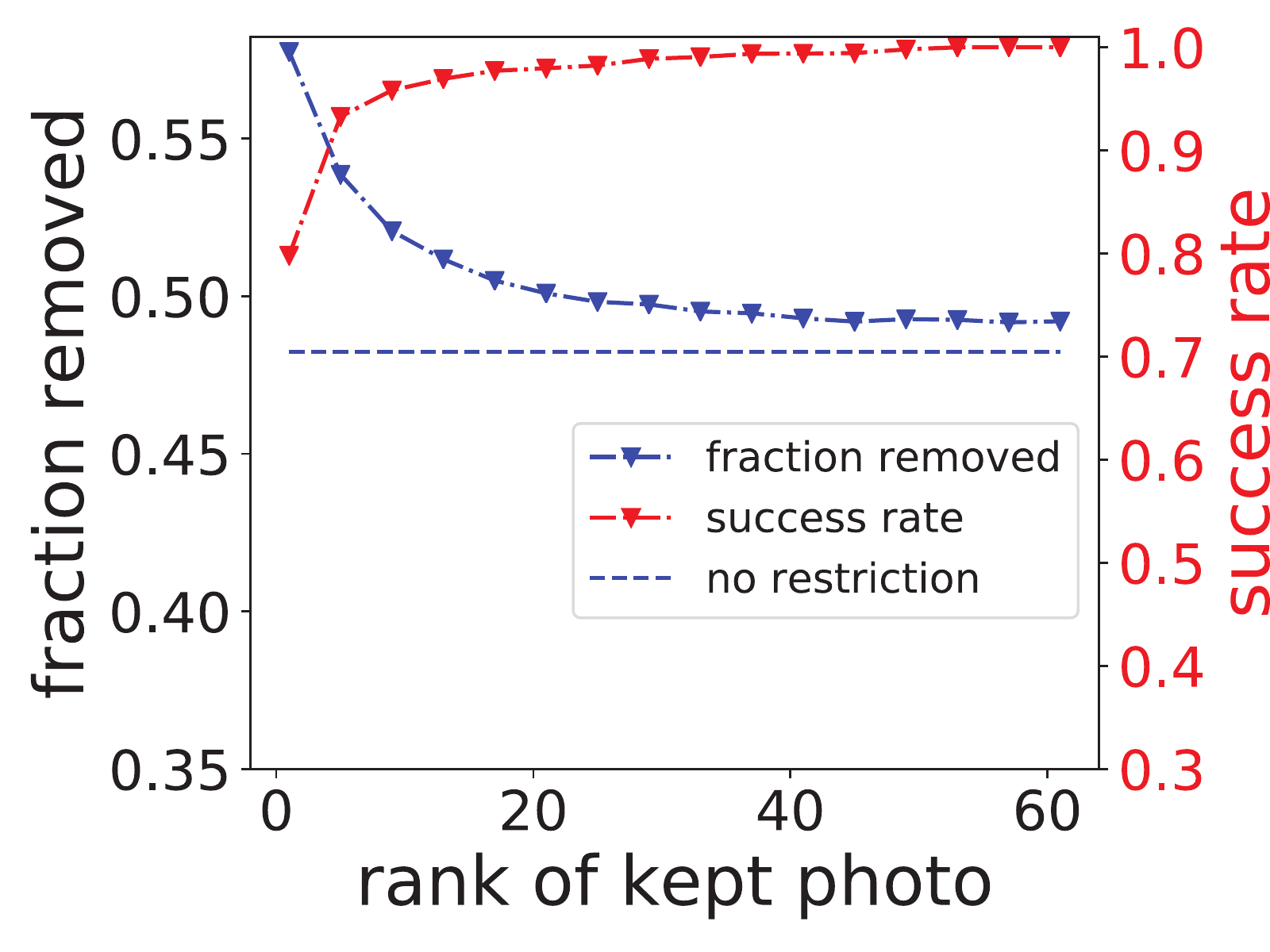}
    & \includegraphics[width=0.45\columnwidth]{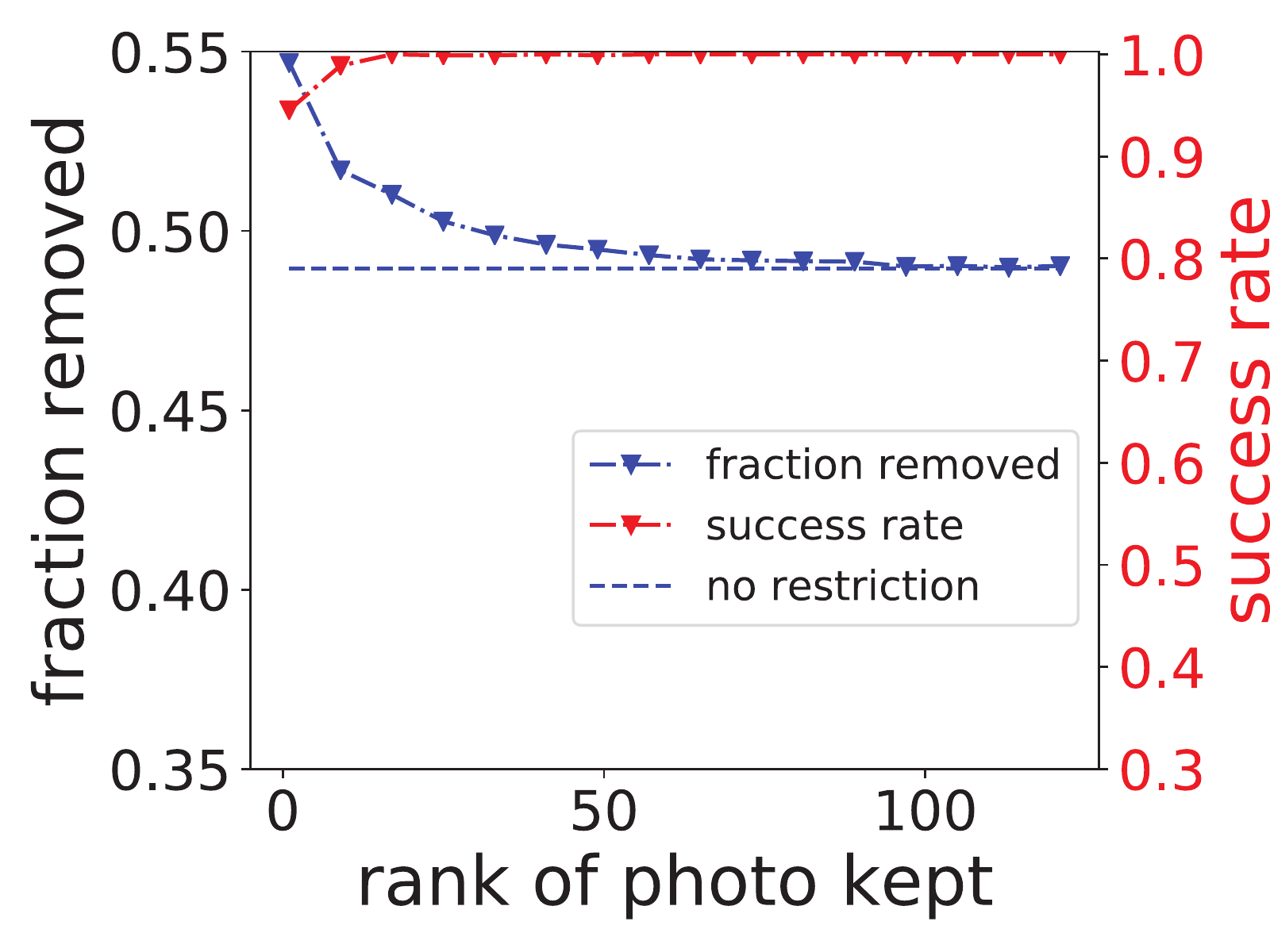}\\
    & \small R-16 & \small R-32 & \small R-64 & \small R-128
  \end{tabular}
  \caption{Effect of user-specified constraint to retain a chosen photo, while providing top-$k$ guarantee. Rank of the chosen photo is with respect to its score for the true location $S^t_i$, success rate indicates fraction of collections for which a feasible deletion set was found, and fraction of photos removed is reported averaged over successful cases.}
  \label{fig:users}
\end{figure*}

\subsection{User Preference}

Next we consider the case where we incorporate user-specified constraints to retain certain photos in each collection. Naturally, the effectiveness of privacy protection will depend on how informative that photo is towards location: preserving privacy may be impossible without deleting images with clearly recognizable landmarks. For a systematic evaluation, we always choose one photo to retain per collection, with different experiments where we consistently choose the photo with the same ``rank'' in terms of true location scores (rank 1 implying photo with highest $S_i^t$).

Figure~\ref{fig:users} presents these results for different collection
types, for the tasks of top-$1$ and -$5$ guarantees. We show the
success rate (fraction of collections where it was possible to achieve
a guarantee without deleting the specified photo) and the average fraction of deletions required (for successful cases). For comparison, we show the baseline fraction of deletions from Table~\ref{table:untargeted}. We see that putting constraints does degrade performance---requiring more deletions and sometimes leading the problem to be infeasible. This is more so for small collections. However, the rank of the photo plays a major role, and the performance drop is negligible when constraining photos that have higher rank (i.e., lower scores for the true location), especially for larger collections. Such settings are likely to match user expectations of maintaining privacy while retaining location neutral photographs.

\subsection{Black-Box Protection}

Finally, we evaluate the case when we do not have access to true classifier scores but only a proxy. For this case, we train two VGG network classifiers, each on two separate halves of the training set. We then use one classifier to derive our proxy scores and apply our approach for top-$k$ guarantees, and evaluate its success by  determining the fraction of collections for which this guarantee holds when using the other classifier. These results are shown in Figure~\ref{fig:margin}, where we report both success rate and the number of deletions required for different values of margin $\theta$. We see that our success at defeating an unknown classifier increases with higher margins, albeit at the cost of an increased number of deletions.

\begin{figure}[!t]
\centering
\begin{tabular}{cc}
\includegraphics[width=0.45\columnwidth]{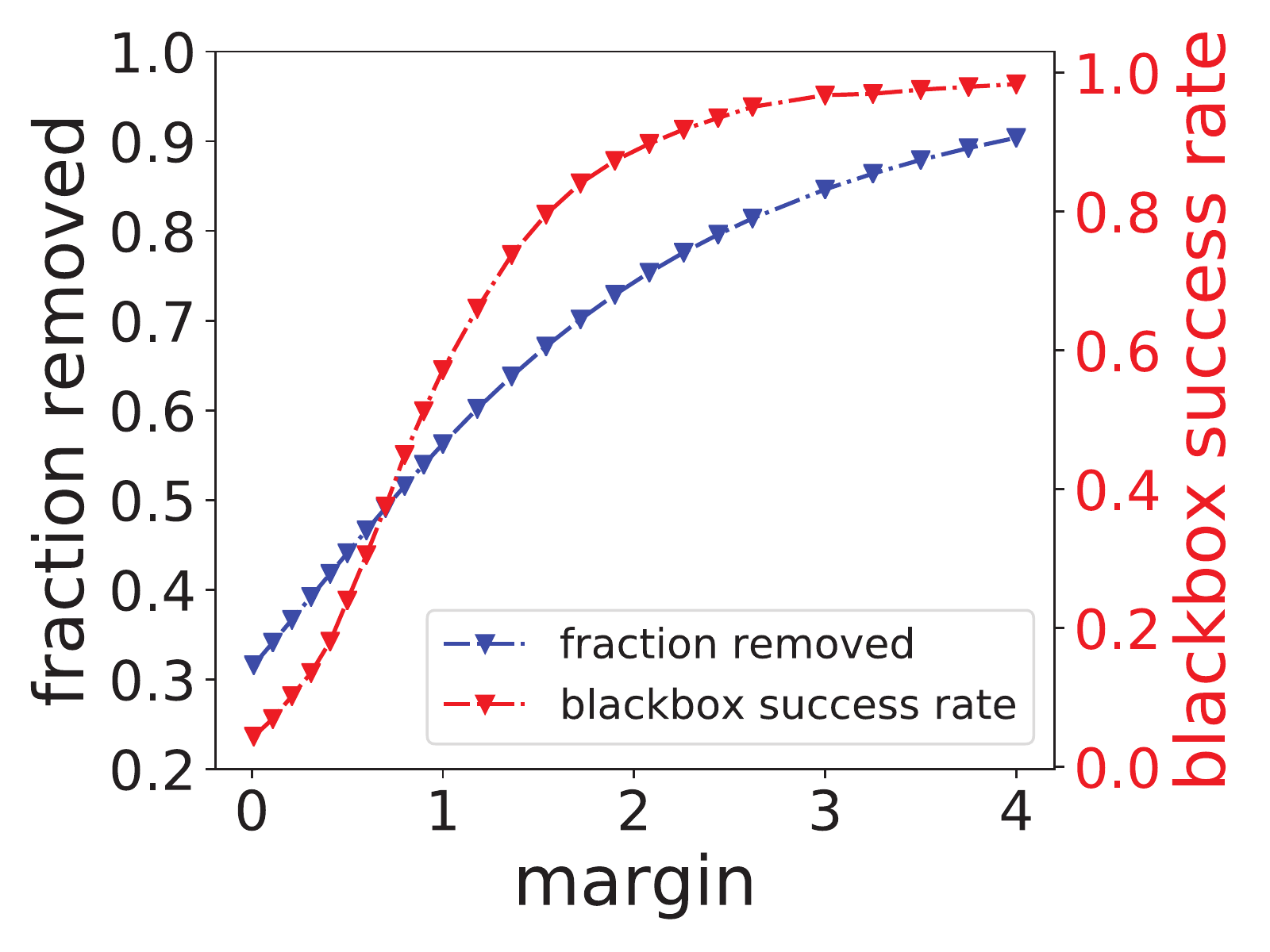} & 
\includegraphics[width=0.45\columnwidth]{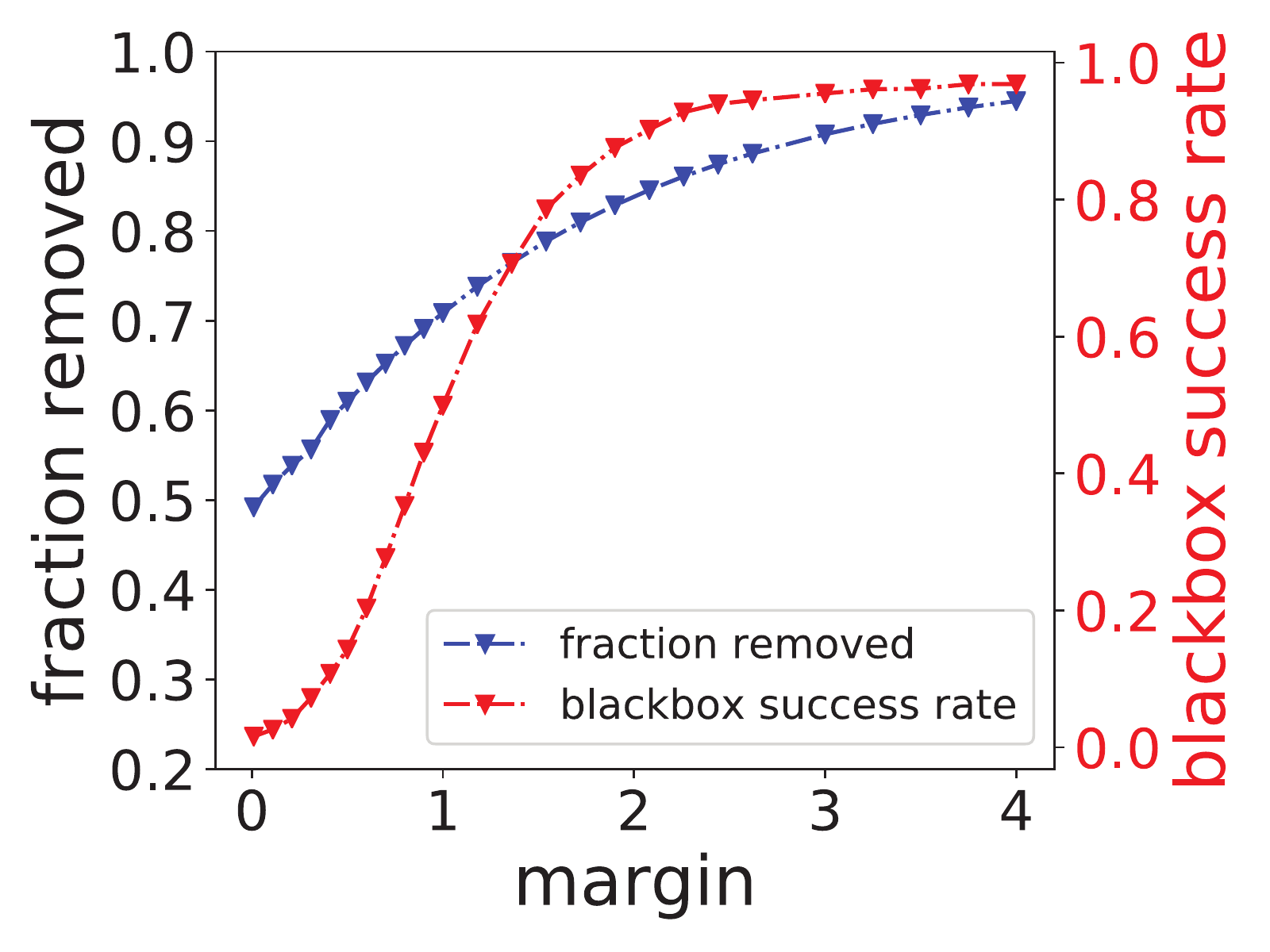} \\
\small{top-1} & \small{top-5}
\end{tabular}
\caption{Effectiveness of black-box attacks as a function of the
  margin $\theta$ for R-128 collections.}
\label{fig:margin}
\end{figure}

\subsection{Running Time}

Although our optimization problem is NP-hard, we find that a modern MILP
solver, CPLEX, returns solutions in reasonable time: $~\sim$9.2s for a R-128 collection and $\sim$4.8s for R-64, for the top-5 guarantee setting.

\section{Conclusion}
Our work demonstrated that photo \emph{collections} pose a clear risk to location privacy---simply from aggregating the outputs of existing deep neural network-based location classifiers---and proposed a framework for ameliorating this risk, through an algorithm for selecting optimal photos for deletion. While we considered that entire set was available to us prior to selection, an interesting direction of future work relates to \emph{online} versions of the selection problem---choosing to delete or keep/post a photo without knowing what other photos may be added to the collection.

It is also worth highlighting the assumptions and limitations of our formulation. Our selection method requires access to at least a reasonable proxy for the model that an adversary may use for location inference---this is implicit in the choice of the margin $\theta$ in the black-box setting. We also assume there is no source of location information other than the images themselves, and accounting for side-channels requires extensions to our framework (which could potentially be modeled as a score offset provided by the side-channel). We also assume conditional independence between different images in a collection, in deriving the collection-level location score as the sum of per-image scores. However, it is conceivable that a more sophisticated location classifier could learn to model user behavior, and leverage higher-order statistics to find, for example, that the presence of certain combinations of images in a collection are more indicative of certain locations.

\section*{Acknowledgments}
This work was partially supported by the National Science Foundation
CAREER award (IIS-1905558) and NVIDIA.

\bibliographystyle{aaai}
\bibliography{paper}

\clearpage
\appendix
\section*{Supplement}

\section{Construction of Scores for Observation 1}
In Table~\ref{tab:album}, We show construct the construction of a collection with probabilities that lead greedy selection to be arbitrarily sub-optimal. 
\begin{table}[!h]
\centering
\begin{tabular}{lrrr}  

\toprule
photos          & $c_1$                        & $c_2$                        & $c_t$                     \\ \midrule
$X_1$       & $\frac{1}{3}$                & $\frac{1}{3}$                & $\frac{1}{3}$             \\ 
...             & ...                          & ...                          & ...                       \\ 
$X_{N}$       & $\frac{1}{3}$                & $\frac{1}{3}$                & $\frac{1}{3}$             \\
$X_{N+1}$ & $a$                            & $\frac{2}{3} + \epsilon - a$ & $\frac{1}{3} - \epsilon$  \\ 
$X_{N+2}$ & $\frac{2}{3} + \epsilon - a$ & $a$                            & $\frac{1}{3} - \epsilon$  \\ 
$X_{N+3}$ & $\frac{1}{3} + \epsilon$     & $\frac{1}{3} + \epsilon$     & $\frac{1}{3} - 2\epsilon$ \\\bottomrule
\end{tabular}
\caption{Probabilities $p(c_j|X_i)$ for a collection where greedy deletion is arbitrarily sub-optimal.}
\label{tab:album}
\end{table}
Note that $a$ and $\epsilon$ must be such that $0 < a < \frac{2}{3} + \epsilon$ and $0<\epsilon<\frac{1}{6}$ to ensure all elements in the matrix are valid probabilities. Further, we must ensure that the most likely location prediction from the collection---without deletions---is the true location $c_t$. Accordingly, we set $a = \frac{(\frac{1}{3} - \epsilon)^{2}\times(\frac{1}{3} - 2\epsilon)}{2(\frac{1}{3} + \epsilon)^{2}}$. It is easy to verify that $a>0$. Since $\frac{(\frac{1}{3} - \epsilon)^{2}}{2(\frac{1}{3} + \epsilon)^{2}}<1$ and $\frac{1}{3} - 2 \epsilon < \frac{2}{3} + \epsilon$, we have $a < \frac{2}{3} + \epsilon$. Also, since  $ a > 0 $, we have $\frac{2}{3} + \epsilon -a < 2(\frac{1}{3}+\epsilon)$. Consequently,
\begin{align}
&  \exp(S^1) = \exp(S^2) \notag\\
	   &= \frac{(\frac{1}{3} - \epsilon)^{2}\times(\frac{1}{3} - 2\epsilon)}{2(\frac{1}{3} + \epsilon)^{2}}  \times  (\frac{2}{3} + \epsilon - a) \times (\frac{1}{3} + \epsilon) \times (\frac{1}{3})^{N}   \notag\\
            &= \frac{(\frac{1}{3} - \epsilon)^{2}\times(\frac{1}{3} - 2\epsilon)}{2(\frac{1}{3} + \epsilon)}  \times  (\frac{2}{3} + \epsilon - a)  \times (\frac{1}{3})^{N}  \notag\\
            &=[\frac{1}{2(\frac{1}{3} + \epsilon)} \times (\frac{2}{3} + \epsilon - a)] \times [( \frac{1}{3} - \epsilon)^{2}\times(\frac{1}{3} - 2\epsilon)  \times (\frac{1}{3})^{N}]  \notag\\
            &< [\frac{1}{2(\frac{1}{3} + \epsilon)} \times 2(\frac{1}{3}+\epsilon)] \times [( \frac{1}{3} - \epsilon)^{2}\times(\frac{1}{3} - 2\epsilon)  \times (\frac{1}{3})^{N}] \notag\\
            &= \exp(S^t).
\label{exp:constraint1}
\end{align}

Note that a greedy algorithm applied to these probabilities will first select $X_1 \ldots X_N$ for deletion before the remaining images (because they have the highest values of $S^t_i$). But deleting these images will make no difference to the rank of the true location $t$ in the corresponding score vector $S'$.

Conversely, optimal selection need only delete $X_{N+1}$ and $X_{N+2}$ to ensure ${S'}^t < {S'}^1$ and ${S'}^t < {S'}^2$:
\begin{align}
  {S'}^1 = {S'}^2 = \log (\frac{1}{3} -2 \epsilon) + N \log (\frac{1}{3})\notag\\
  < \log (\frac{1}{3} + \epsilon) + N \log (\frac{1}{3}) = {S'}^t.
\end{align}

\if 0
Consequently, the optimal solution would remove at most 2 photos.
Next, observe that scores of $c_1$ and $c_2$ are lower than the score of $c_3$ after removing the photos selected by the heuristic before finding the solution.
The solution selected by heuristic would be $\{photo_1, photo_2, \cdots, photo_{n-3}, photo_{opt_1}, photo_{opt_2} \}$, with $n$ the arbitrary number of photos in the album.
Consequently, the heuristic will remove $n-1$ photos, whereas the optimal solution will remove at most 2.
Since $n$ can be arbitrary, the gap between these can be made arbitrarily large.
\fi
\newpage

\section{Algorithm for $k=1$}

\begin{algorithm}[h]
\caption{Untargeted with k = 1}
\label{alg:algorithm}
\textbf{Input}: $c_t$, $\mathcal{C}^{-c_t} = = C \setminus c^{t}$, $\mathcal{A}$ \\
\textbf{Output}: $\mathcal{D}$ 
\begin{algorithmic}[1] 
\STATE Initialize $\mathcal{D} = \emptyset$, d = 1
\WHILE{$d < |\mathcal{A}|$}
\FOR {$c_j \in \mathcal{C}^{-c_t}$}
\STATE Sort $S^{j} - S^{t}$ in ascending order. 
\STATE Delete the first $d$ elements in $S^{j} - S^{t}$ and update $\mathcal{D}$ with indexes of the deleted elements. 
\IF {$\sum_{i}(S_i^{j} - S_i^{t}) > 0$}
\STATE \textbf{return} $D$. 
\ENDIF
\ENDFOR
\ENDWHILE
\STATE \textbf{return} No approach to protect. 
\end{algorithmic}
\end{algorithm}

\section{Linearizing Constraint~\eqref{eq:topkilp}}
Since we need all constraints and objectives to be linear to use an MILP solver, we adopt a standard linearization technique to arrive at a modified set of constraints. Choosing $T$ to be a sufficient large number, and $w_j = h_j v_j$, the following optimization problem with only linear constraints is equivalent to ours (for the top-$k$ guarantee):
\begin{align}
    &\text{max} \qquad \sum_{i} z_i &\notag \\
    &\text{s.t.} \qquad w_j \geq 0, \quad j=1,2,...,M-1 &\notag \\
    &    \qquad    \quad  w_j \geq -T h_j &\notag\\
    &    \qquad    \quad  w_j \leq T h_j &\notag\\
    &    \qquad    \quad  w_j \geq v_j - T(1-h_j) &\notag\\
    &    \qquad     \quad  w_j \leq v_j + T(1-h_h) &\notag\\
    &     \qquad     \quad   \sum_{j} h_j \geq k &\notag \\
    &\text{where} \quad v_j = \sum_i z_i \left( S_i^j - S_i^t \right) & 
\end{align}
A similar approach can be used for the fixed budget problem.





\end{document}